\documentclass[sigconf]{acmart}

\usepackage{times}
\usepackage{helvet}
\usepackage{courier}
\usepackage{amsmath}
\usepackage{bm}
\usepackage{booktabs}
\usepackage{float}
\usepackage{amsmath}
\usepackage{graphicx}  
\usepackage{bm}
\usepackage{algorithm, algorithmic}
\usepackage{caption}
\usepackage{subfigure}
\usepackage{makecell}
\usepackage{wrapfig}
\usepackage{stfloats}
\usepackage{bbm}
\usepackage{multirow}
\usepackage{makecell}
\usepackage{colortbl}
\usepackage{enumitem}
\usepackage{empheq}

\newtheorem{myDef}{Definition}
\newtheorem{myLem}{Lemma}

\newtheorem{mytheory}{Theorem}
\newtheorem{myPro}{Proposition}

\AtBeginDocument{%
	\providecommand\BibTeX{{%
			\normalfont B\kern-0.5em{\scshape i\kern-0.25em b}\kern-0.8em\TeX}}}


\copyrightyear{2023} 
\acmYear{2023} 
\setcopyright{acmlicensed}\acmConference[WWW '23]{Proceedings of the ACM Web Conference 2023}{May 1--5, 2023}{Austin, TX, USA}
\acmBooktitle{Proceedings of the ACM Web Conference 2023 (WWW '23), May 1--5, 2023, Austin, TX, USA}
\acmPrice{15.00}
\acmDOI{10.1145/3543507.3583335}
\acmISBN{978-1-4503-9416-1/23/04}
%
%

\acmSubmissionID{3779}




\begin{document}
	
	\title{Robust Mid-Pass Filtering Graph Convolutional Networks}
	
	\author{Jincheng Huang}
	\authornote{Work performed during the internship at MSRA}
	\affiliation{%
		\institution{School of Computer Science, Southwest Petroleum University}
		\city{Chengdu}
		\country{China}
	}
    \email{huangjc0429@gmail.com}

	\author{Lun Du}
	\authornote{Corresponding Author}
	\affiliation{%
		\institution{Microsoft Research Asia}
		\city{Beijing}
		\country{China}
	}
    \email{lun.du@microsoft.com}
    
	\author{Xu Chen}
	\affiliation{%
		\institution{Microsoft Research Asia}
		\city{Beijing}
		\country{China}
	}
    \email{xu.chen@microsoft.com}
	
	\author{Qiang Fu}
	\affiliation{%
		\institution{Microsoft Research Asia}
		\city{Beijing}
		\country{China}
	}
    \email{qifu@microsoft.com}
	
	\author{Shi Han}
	\affiliation{%
		\institution{Microsoft Research Asia}
		\city{Beijing}
		\country{China}
	}
    \email{shihan@microsoft.com}
	
	\author{Dongmei Zhang}
	\affiliation{%
		\institution{Microsoft Research Asia}
		\city{Beijing}
		\country{China}
	}
    \email{dongmeiz@microsoft.com}
	
	
	
	\begin{abstract}
Graph convolutional networks (GCNs) are currently the most promising paradigm for dealing with graph-structure data, while recent studies have also shown that GCNs are vulnerable to adversarial attacks. Thus developing GCN models that are robust to such attacks become a hot research topic.

However, the structural purification learning-based or robustness constraints-based defense GCN methods are usually designed for specific data or attacks, and introduce additional objective that is not for classification. Extra training overhead is also required in their design.
To address these challenges, we conduct in-depth explorations on mid-frequency signals on graphs and propose a simple yet effective \textbf{Mid}-pass filter \textbf{GCN} (\textbf{Mid-GCN}). Theoretical analyses guarantee the robustness of signals through the mid-pass filter, and we also shed light on the properties of different frequency signals under adversarial attacks. Extensive experiments on six benchmark graph data further verify the effectiveness of our designed Mid-GCN in node classification accuracy compared to state-of-the-art GCNs under various adversarial attack strategies.

\end{abstract}

\begin{CCSXML}
<ccs2012>
   <concept>
       <concept_id>10002950.10003624.10003633.10003645</concept_id>
       <concept_desc>Mathematics of computing~Spectra of graphs</concept_desc>
       <concept_significance>500</concept_significance>
       </concept>
   <concept>
       <concept_id>10010147.10010178</concept_id>
       <concept_desc>Computing methodologies~Artificial intelligence</concept_desc>
       <concept_significance>500</concept_significance>
       </concept>
 </ccs2012>
\end{CCSXML}

\ccsdesc[500]{Mathematics of computing~Spectra of graphs}
\ccsdesc[500]{Computing methodologies~Artificial intelligence}
	
	\keywords{graph neural networks, spectral graph neural networks, robustness, adversarial attacks, node classification}
	

\maketitle
	
\section{Introduction}
Recently, graph convolutional neural networks (GCNs) have achieved promising performance in graphs, which bridge the gap between traditional structured-data-oriented deep learning methods and graph-structured data. Therefore GCNs are applied in a broad range of fields, such as molecular prediction~\cite{molecular2,molecular}, traffic prediction~\cite{traffic,traffic2}, web search~\cite{graph_web,graph_web2} and so on. However, a host of recent studies have shown that GCNs are vulnerable to adversarial attacks~\cite{nettack,metattack,attack_1,attack_2}, and the lack of robustness may lead to security and privacy problems. For instance, under criminal accomplice prediction on social networks, in order to disguise themselves, accomplices can directly trade with people with high credit, which leads to pollution of first-order neighbors of the high credit group and reduce their credibility. Similarly, with the explosive growth of network resources, a tremendous amount of illegal advertisements and phishing websites are flooding the world wide web. These malicious web pages can pursue a high search engine recommendations ranking through links from high-quality websites (with some black hat SEO technology.) Such cases can be of great challenges to GCNs; therefore, how to develop GCNs capable of resisting adversarial attacks is a particularly critical issue.

Current adversarial attack methods on GCNs can be divided into targeted attacks and non-targeted attacks. Targeted attacks mainly focus on fooling GCNs to misclassify target nodes, and the most representative method, \emph{nettack}~\cite{nettack}, generates unnoticeable perturbations by preserving degree distribution and imposing constraints on feature co-occurrence. As for non-targeted attacks, attackers intend to reduce the overall test set accuracy, and \emph{mettack}~\cite{metattack} is the classic one that generates poisoning attacks based on meta-learning.
Revisiting these effective defense GCNs we discover that existing studies pay more attention to restoring a clean graph, that is, restoring deleted original edges and deleting adversarial edges~\cite{gcn-jaccard,gcn-svd,prognn,zhang2020gnnguard,nagraph,huang2022learning}. These methods require additional plug-ins and introduce some inductive biases, which cause a series of problems: (1) The elaborate network architecture or optimization objective of mainstream defense models is based on specific data or attack paradigms, leading to unstable and even degraded performance on different datasets or attacks. For example, GCN-SVD~\cite{gcn-svd} is designed for \emph{nettack} attacks, and will performance worse after the \emph{mettack} attacks. 
(2) The robust regularization term or model component of defense models aims to defend adversarial attacks~\cite{rgcn,gcn-jaccard,gcn-svd,prognn,LFR}, which is inconsistent with the desirable node classification prediction accuracy, and thus the learned representations are not the optimal ones for this task. (3) Model complexity increases with the additional network components or regularization objective, and it also raises the burden on computation resources as well as the consuming time. Since there are many drawbacks of extra design for defending the adversarial attacks, a potential solution would be re-design GCNs, more precisely, elaborate a graph filter that is capable of preserving more attack-insensitive information.

To fill this gap, we conduct an experiment to investigate the impact of adversarial attacks on the eigenvalues of real-word citation graphs as graph filter is always associated with its eigenvalues. 
From Figure~\ref{fig:spectral} we can observe that \textbf{eigenvalues of two real-graph within mid-frequency are less affected by adversarial attacks.}
In this paper, the theoretical analysis provides us an insight that mid-frequency signals tend to preserve information from higher-order neighbors.
Considering attacks on graphs, direct perturbations on one-hop neighbors of nodes are always more effective than on multi-hop neighbors ~\cite{zhu2022does}. Besides, the exponential growth of the number of high-order neighbors also increases the difficulty of attacks. It indicates the potential benefit of defending against adversarial attacks on GCNs by preserving information from neighbors of higher order, i.e., by utilizing mid-frequency signals.

\begin{figure}[!ht]
	\centering
	\subfigure[Cora]{
		\begin{minipage}[t]{0.5\linewidth} 
			\centering
			\includegraphics[scale=0.27]{./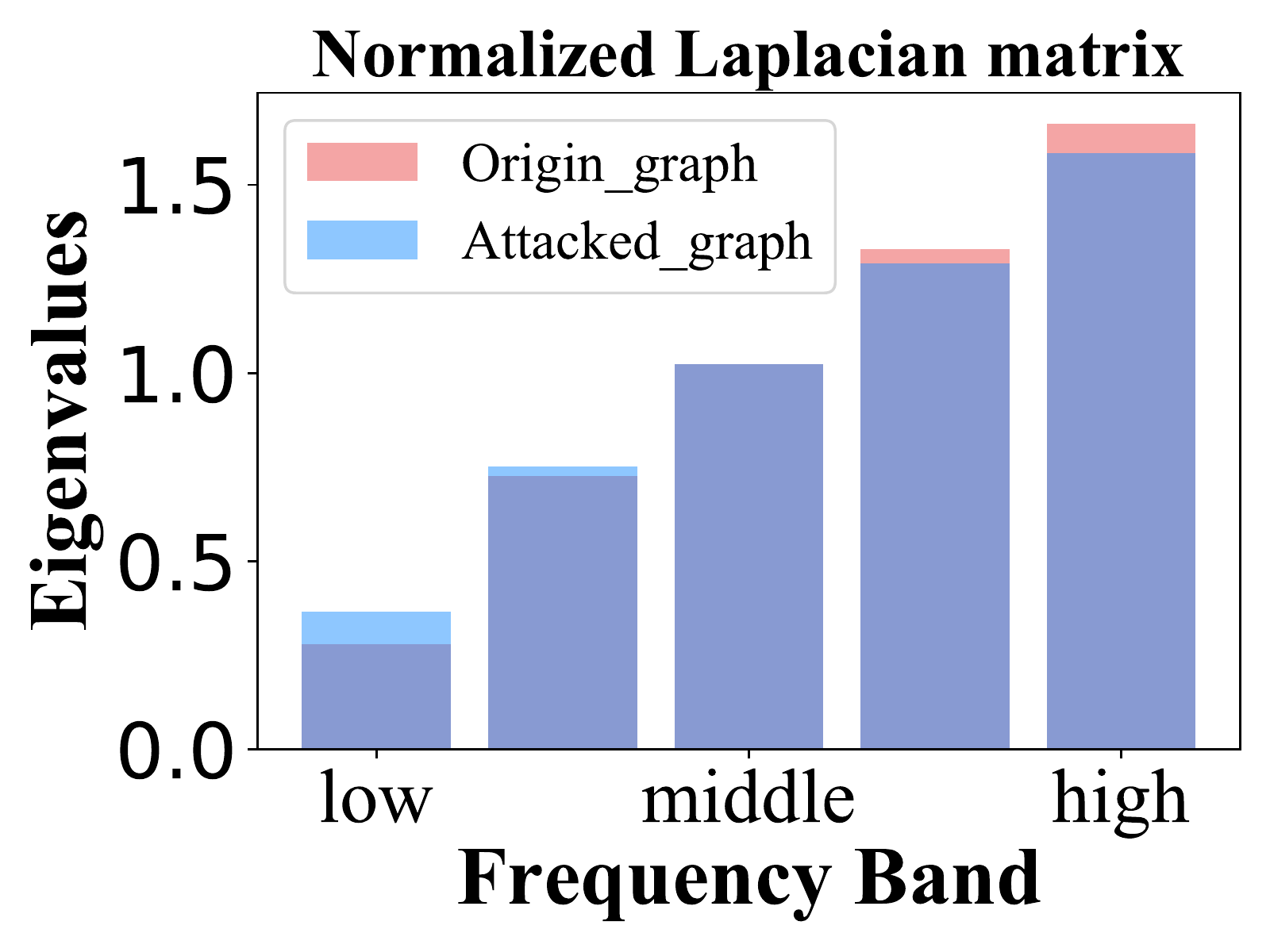} 
		\end{minipage}%
	}\subfigure[Citeseer]{
		\begin{minipage}[t]{0.5\linewidth}
			\centering
			\includegraphics[scale=0.27]{./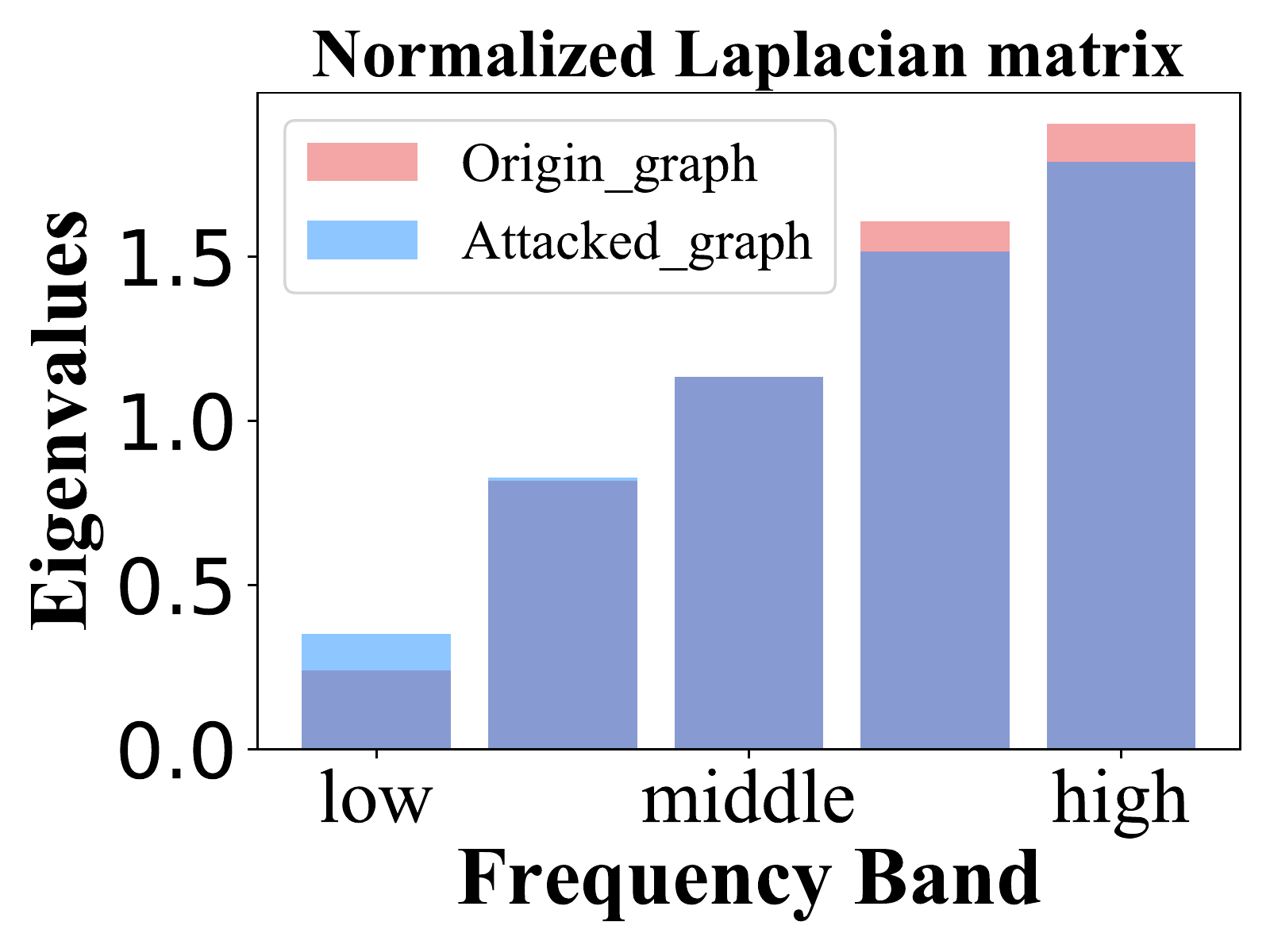}
		\end{minipage}%
	}
	\caption{Eigenvalues variation before and after \emph{metattack} on Cora and Citeseer.}
	\label{fig:spectral}
\end{figure}

Hence, we study the mid-frequency signal of GCN and propose a general mid-pass filtering GCN (\textbf{Mid-GCN}) in this paper. 
Unlike higher-order polynomial filtering-based spectral GCNs with many parameters and hyperparameters that may cause over-fitting, or first-order information-based GCN such as vanilla GCN only low-frequency signals~\cite{gcn_lowpass} are exploited and FAGCN~\cite{fagcn} weighting high-frequency and low-frequency signals,
Mid-GCN focuses on the mid-frequency signals whose Laplacian eigenvalue is around 1. We theoretically verify the robustness of Mid-GCN under structural perturbations. Extensive experiments on various graph datasets under different adversarial attacks also exhibit the superiority of our proposed Mid-GCN over other state-of-the-art defense methods. More surprisingly, Mid-GCN outperforms vanilla GCN for the node classification task on unperturbed graphs.

The contributions of this paper are summarized as follows:
\begin{itemize}
    \item This paper, to the best of our knowledge, is the first study that explores mid-frequency signals of GCNs from the perspective of robustness, and a critical inspiration is that \emph{mid-frequency signals are not susceptible to adversarial attacks}.
    \item we design a mid-pass filtering GCN model \emph{Mid-GCN} due to the robustness of middle-frequency signals against adversarial attacks.
    We theoretically prove such a property of the mid-frequency signal-based GCN.
    \item Extensive experimental results demonstrate the effectiveness of our proposed Mid-GCN under various adversarial attack strategies.

\end{itemize}

\section{Related Work}
In this section, we introduce related work from spectral graph neural networks and defense models.\\

\subsection{Spectral Graph Neural Network.} Spectral graph neural network is a main type of graph neural network, which mainly focuses on the filter design of the eigenvalues of the Laplacian matrix~\cite{spectral_lecun}. Earlier, \cite{cheyGCN2016} used Chebyshev polynomials to fit the filter shape, enabling fast convolution of spectral neural networks. On this basis, GCN~\cite{kipf2017gcn} truncates the first two Chebyshev polynomials to further simplify the entire operation and bring improved performance and many subsequent work~\cite{2019sgc,appnp,heat-gcn,ma2021unified} on spectral graph neural networks are based on GCN. Scattering GCN~\cite{scatterGCN} uses wavelet matrix to approximate the bandpass filter to assist solve the problem of over-smoothing on the graph. GPRGNN~\cite{GPRGNN2021} directly learns the parameters of the polynomial, i.e., learns the shape of the filter. BernNet~\cite{bernnet} and JacobiConv~\cite{jacobi-net} use different polynomials with fortunate properties to learn approximate filters. The above methods can obtain various types of filters through polynomials of different orders, including low-pass filter, high-pass filter, band-pass filter, band-stop filter, etc. However, they contain too many parameters and hyperparameters, which will lead to slow hyperparameter optimization, overfitting and other risks. Recently EvenNet~\cite{evennet} found that even-hop neighbors are more robust, so they proposed an even-polynomial graph filter. However, the current methods do not pay attention to the 
middle frequency signal. This article is the first to discover the power of the middle frequency signal in the defense adversarial attack.\\

\subsection{Adversarial Attacks and Defense for GNNs.}In the field of deep learning, model robustness and adversarial attack/defense have always been an important part of exploring model capabilities. Adversarial attacks are deliberately designed perturbations to minimize the performance of the model and these perturbations are often not easily detected. Due to the special graph structure of GNN, the previous methods cannot be used in adversarial attacks, so many works have explored the adversarial attack of graph structures~\cite{nettack,metattack,deeprobust,attack_1,attack_2,attack_2018,zheng2021graph} . Attack methods can be divided into two types: 1. Targeted attack~\cite{nettack,attack_2018}. For a specific node attack, misleading the model to classify the target node, the most representative method is \emph{nettack}~\cite{nettack}, which generates imperceptible perturbations by preserving degree distribution and feature co-occurrence. 2.Non-targeted attack~\cite{metattack}. It is not attacking a certain node, but the goal is to make the overall effect on the test set worse, \emph{metattack}~\cite{metattack} attacks graph structure through meta-learning. For the above-mentioned adversarial attacks, researchers also pay attention to how the GNN model can cope with these attacks. We divide defense methods into two categories, 
\begin{itemize}[leftmargin=*]
    \item \textbf{Graph structure purification methods.} This kind of method is based on some properties of the graph to restore the original graph structure, such as GCN-Jaccard~\cite{gcn-jaccard} purifies graphs by Jaccard similarity based on feature homophily assumption. CLD-GNN~\cite{nagraph} diffuses the training set labels into the test set through label propagation, then preserves the edges between nodes of the same class and deletes edges between different node classes. GCN-SVD~\cite{gcn-svd} found that the nettack attack only affects the part of the adjacency matrix with smaller singular values. Naturally, the noise can be removed by approximating the original adjacency matrix with low rank.
    and state-of-the-art model Pro-GNN~\cite{prognn} constrains the graph structure to be low-rank and sparse through regular terms, so that the graph structure is close to the real structure. The same as Pro-GNN is GNNGUARD~\cite{zhang2020gnnguard}, which  detects the adverse effects existing in the relationship between the graph
    structure and node features by neighbor importance estimation and layer-wise graph memory.

    \item \textbf{Robust network methods.} This type of method usually designs a robust network structure that can well defend against the impact of adversarial attacks. A typical method is RGCN~\cite{rgcn}, which represents the node features with a Gaussian distribution to absorb the effects of adversarial changes. LFR~\cite{LFR} is a spectral method to bring the robustness information from eligible low-frequency components in the spectral domain.
\end{itemize}


        \section{Preliminaries}
\label{sec:prelimit}
\textbf{Notations.} Let $\mathcal{G} = (V,E)$ be an undirected and unweighted graph with node set $V$ and edge set $E$. The nodes are described by a feature matrix~ $ X \in\mathbb{R}^{n\times f}$, where $n$ is the number of nodes and $f$ is the number of features for each node. One node is associated with a class label that is depicted in the label matrix $Y \in\mathbb{R}^{n\times c}$ with a total of $c$ classes. We represent the graph by its adjacency matrix $A \in \mathbb{R}^{n\times n}$ and the graph with self-loops can be denoted as $\tilde{A} = A + I_{n}$.\\

\noindent\textbf{Graph Fourier Transform and Graph Signals.} In the light of graph signal theory~\cite{graph_fourier}, graph convolution is equivalent to the Laplacian transform of graph signals from the spatial domain to the spectral domain. Let $L = I_{n} - D^{-\frac{1}{2}}AD^{-\frac{1}{2}}$ be the normalized graph Laplacian matrix, which is positive semi-definite and has a complete set of orthogonal eigenvectors $\{u_{l}\}^{n}_{l=1}$ with $u_{l}\in \mathbb{R}^{n}$ corresponding to the eigenvalue $\lambda_{l} \in [0, 2]$. Specifically, $L=U\Lambda U^T$ where $\Lambda=\operatorname{diag}([\lambda_{1},\ldots,\lambda_{n}])$.
Eigenvectors of the Laplacian matrix are analogous to the basis functions of Fourier transform that can transform signals from the spatial domain to the spectral domain 
as $\hat{x} = U^{T}x$, and the inverse transform is $x = U\hat{x}$. Hence, the convolution operation over graph signal $x$ with the kernel $f$ can be formulated as:
\begin{equation}
	\label{eq:graph_four}
 f \ast_{\mathcal{G}} x =
g_{\theta}\star x = U g_{\theta} U^T x,
\end{equation}
where $g_{\theta}=\operatorname{diag}(\theta)$ is a diagonal matrix that make modifications on the spectral domain.

\section{Methodology}
In this section, we describe our designed mid-pass filtering GCN and theoretically analyze the benefits of mid-frequency signals in defending against adversarial attacks in detail. Some cases are also provided to intuitively demonstrate the advantages of mid-frequency signals.
\label{sec:method}
\subsection{Mid-GCN: Mid-Pass Filtering GCN}
\label{sec:model}
As depicted in Figure~\ref{fig:filter}, low-pass filter suppresses the high-frequency signals and enhances the low eigenvalues while the high-pass filter inhibits the low-frequency signals and the high eigenvalues gains. Intuitively, to achieve the maximum value when $\lambda$ is around 1, the filter should be a quadratic function-like shape. Hence, we design a simple mid-pass filter as:
\begin{equation}
    \begin{aligned}
g_{\theta} &= \Lambda(2I-\Lambda), \\
\mathcal{F} &= (I-D^{-\frac{1}{2}}AD^{-\frac{1}{2}})(I + D^{-\frac{1}{2}}AD^{-\frac{1}{2}}).
\end{aligned}
\end{equation}

\begin{figure}[htpb]
	\centering
	\includegraphics[width=0.5\textwidth]{./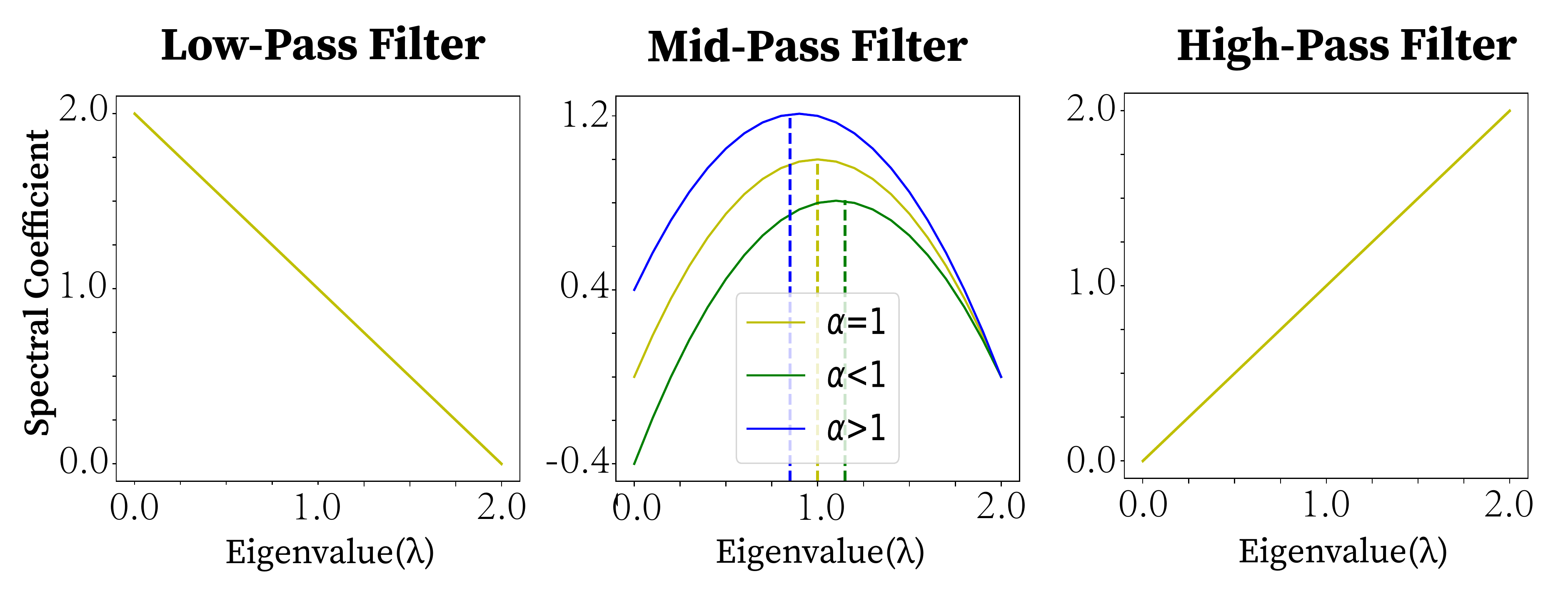}
	\caption{Shape of low-pass, mid-pass and high-pass filter.}
	\label{fig:filter}
\end{figure}

We further increase the model flexibility by introducing a 
hyper-parameter $\alpha$ to balance between the high-frequency and low-frequency signals, and the Mid-pass filtering GCN can be denoted as:

\begin{equation}
\label{eq:mid-gnn}
    Z = (\alpha I-D^{-\frac{1}{2}}AD^{-\frac{1}{2}})(I + D^{-\frac{1}{2}}AD^{-\frac{1}{2}})XW,
\end{equation}
with $\alpha \in [0, 2]$. When $\alpha \rightarrow 0$, it tends to be a high-pass filter, and it will become a low-pass filter if $\alpha \rightarrow 2$. $W \in \mathbb{R}^{f \times h}$ is learnable parameters for feature transformation over signals. Figure~\ref{fig:filter} also provides a more intuitive image of the mid-pass filter. 

\vspace{-0.5cm}
\begin{figure}[htpb]
	\centering
	\subfigure[Cora]{
		\begin{minipage}[t]{0.5\linewidth} 
			\centering
			\includegraphics[scale=0.1]{./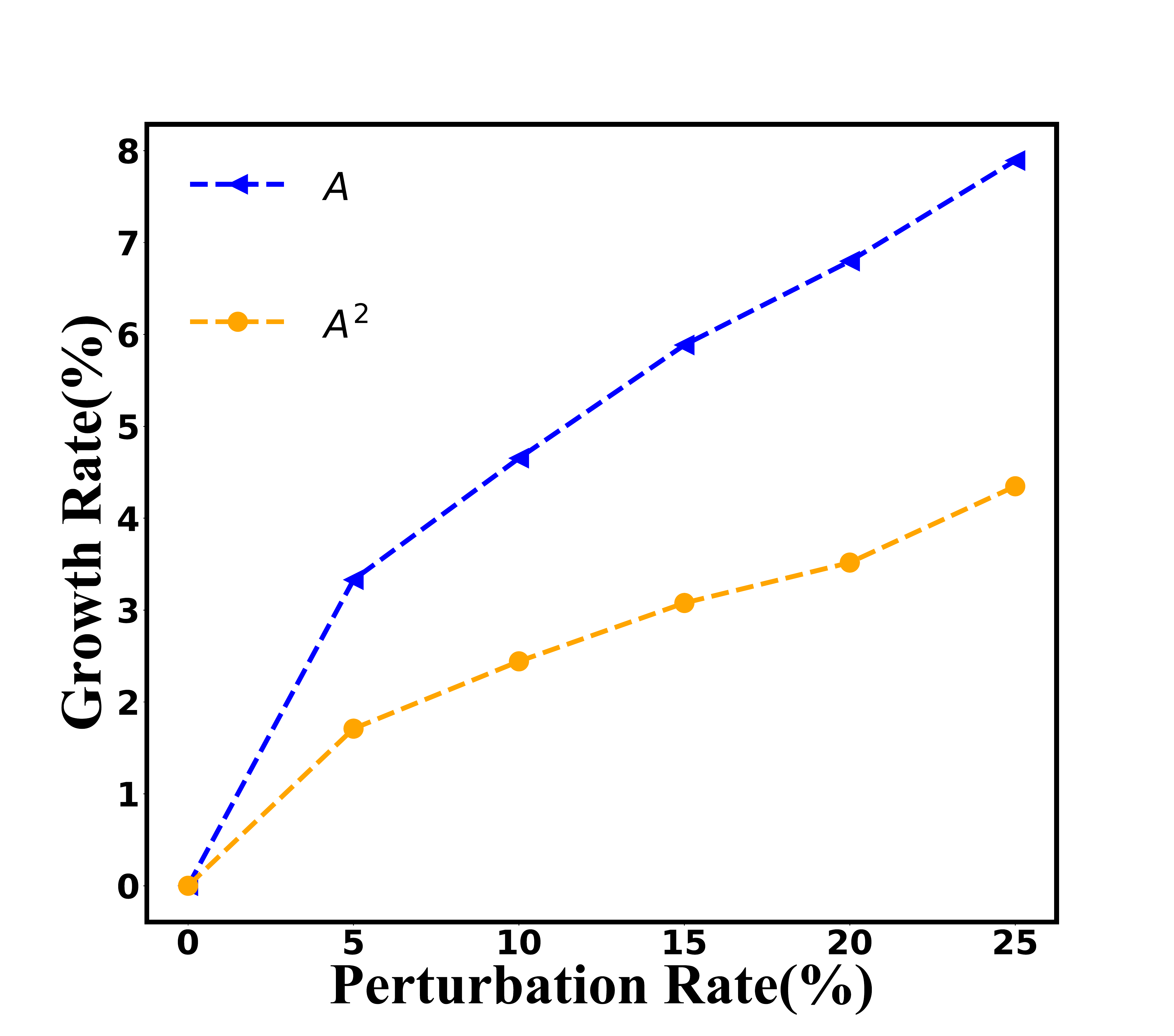} 
		\end{minipage}%
	}\subfigure[Citeseer]{
		\begin{minipage}[t]{0.5\linewidth}
			\centering
			\includegraphics[scale=0.1]{./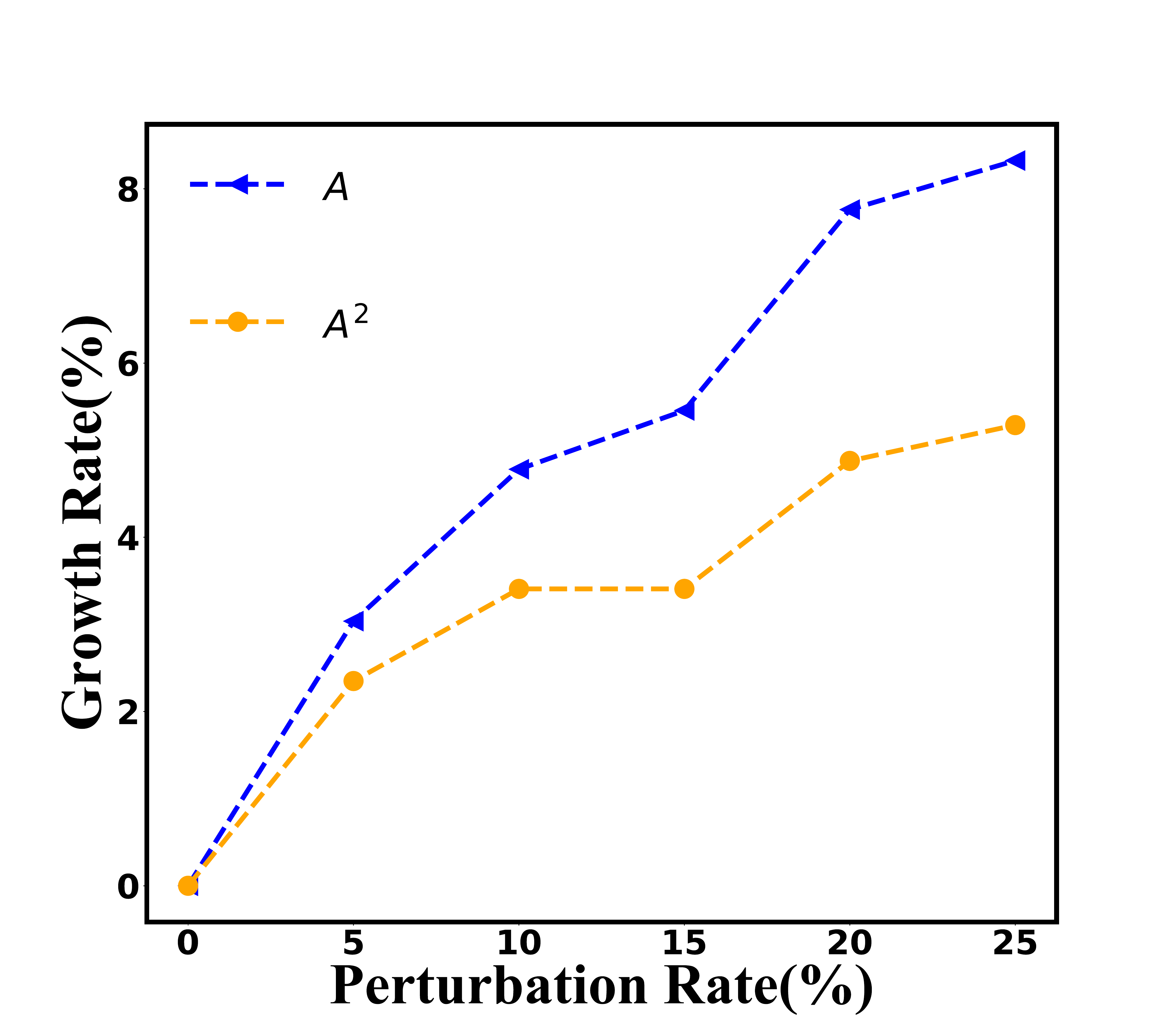}
		\end{minipage}%
	}
	\caption{Rank growth of $A$ and $A^2$ under \emph{metattack}.}
	\label{fig:rank}
\end{figure}

\noindent\textbf{Rank growth of $A$ and $A^2$ under attacks.} Previous work~\cite{prognn} shows that real-world graph structures are generally low-rank, and rank of its adjacency matrix $A$ increases as the \emph{metattack} perturbation rate rises. Since the graph filter of GCN is the first order of $A$ and that of Mid-GCN is $A^2$, we apply attacks to Cora and Citeseer to see the variations of the rank of $A$ and $A^2$ in Figure~\ref{fig:rank}. It inspires us that rank of higher-order graph filter is more robust to metattack. Moreover, higher-order filters capture information from higher-order neighbors and can avoid the potential risk of perturbations on first-order neighbors, which is consistent with our original purpose and also shares similar ideas with some existing research~\cite{zhu2022does}. 

\subsection{Analysis of Structural Attack Influence on Node Representation Distance}
\label{sec:theo}

As mentioned in existing studies~\cite{nettack,zhu2022does}, direct perturbations on one-hop neighbors of nodes are more effective than perturbations on multi-hop neighbors. Therefore, we conduct an in-depth analysis to explore the influence of signals within different frequency bands on the distance of first-order neighbors. Apart from the low-pass filter $\mathcal{F}_{L}$ from GCN and the mid-pass filter $\mathcal{F}_{M}$ of our Mid-GCN, We design a high-pass filter $\mathcal{F}_{H}$ as:
\begin{equation}
	\begin{aligned}
	&\mathcal{F}_{L} =I +  D^{-\frac{1}{2}}AD^{-\frac{1}{2}}=2I-L \\
	&\mathcal{F}_{H} = I-D^{-\frac{1}{2}}AD^{-\frac{1}{2}}=L \\
	&\mathcal{F}_{M} = (I-D^{-\frac{1}{2}}AD^{-\frac{1}{2}})(I + D^{-\frac{1}{2}}AD^{-\frac{1}{2}})=L(2I-L), \\
	\end{aligned}
\end{equation}

According to the above filters, the processed graph signals can be written as follows:

\begin{equation}
	\begin{aligned}
	&\mathcal{F}_{L} \ast_\mathcal{G} x = U[2I-\Lambda]U^{T}x  \\
	&\mathcal{F}_{H}\ast_\mathcal{G}x = U[\Lambda]U^{T}x\\
	&\mathcal{F}_{M}\ast_\mathcal{G}x = U[\Lambda(2I-\Lambda)]U^{T}x ,
	\end{aligned}
\end{equation}

Previous work~\cite{fagcn} have analyzed the impact of high-pass filter and low-pass filter from the perspective of the distance between representation of connected nodes, and we further investigate the effect of the mid-pass filter on this measure. Suppose there is a pair of connected nodes $(u, v)$ on the graph, and their representations are $h_{u}$ and $ h_{v}$, respectively. Euclidean distance of two nodes can be calculated as $\mathcal{D} = \left \|h_{u} - h_{v} \right \|_{2}$. Therefore, we denote $\mathcal{D}_{L}$, $\mathcal{D}_{H}$, and $\mathcal{D}_{M}$ as the node distances of low-frequency/high-frequency/mid-frequency signals.

\begin{equation}
\label{eq:dis}
\begin{aligned}
\mathcal{D}_{L} &= \left \|(h_{u} + \frac{1}{\sqrt{d_{u}}\sqrt{d_{v}}}h_{v}) - (h_{v} + \frac{1}{\sqrt{d_{u}}\sqrt{d_{v}}}h_{u}) \right \|_{2} \\ &= (1-\frac{1}{\sqrt{d_{u}}\sqrt{d_{v}}})\mathcal{D} \\
\mathcal{D}_{H} &= \left \|(h_{u} - \frac{1}{\sqrt{d_{u}}\sqrt{d_{v}}}h_{v}) - (h_{v} - \frac{1}{\sqrt{d_{u}}\sqrt{d_{v}}}h_{u}) \right \|_{2} \\ &= (1+\frac{1}{\sqrt{d_{u}}\sqrt{d_{v}}})\mathcal{D} \\
\mathcal{D}_{M} &= \begin{cases}
     (1+\frac{1}{\sqrt{d_{u}}\sqrt{d_{v}}} \sum_{t}\frac{1}{d_{t}})\mathcal{D}  &\quad\text{if }  t\in \mathcal{N}_{u} \,\text{and } t\in \mathcal{N}_{v}, \\ 
    \mathcal{D}   & \quad \text{otherwise.}
\end{cases}
\end{aligned}
\end{equation}
where $t$ is the node connecting $u$, $v$ and $d_{t}$ is the degree of $t$, $d_{t} \geq 2$. $\mathcal{N}_{v}$ denotes the neighbor node set of node $v$. We have the following 

\begin{myPro}
	\label{pro:mid_1}
	The low-frequency signal reduces the distance of the connected node while the high-frequency signal amplifies the distance of the connected node. And the mid-frequency signal has the least impact on the distance, i.e., the distance is slightly enlarged or stays unchanged.
\end{myPro}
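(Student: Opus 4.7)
The plan is to establish the proposition by directly reading off the three distance formulas in Equation~(\ref{eq:dis}) and comparing the prefactors in front of the base distance $\mathcal{D}$. Concretely, I would first derive the three expressions in (\ref{eq:dis}) from the spatial forms of $\mathcal{F}_L$, $\mathcal{F}_H$, $\mathcal{F}_M$, then prove the ordering $\mathcal{D}_L \le \mathcal{D} \le \mathcal{D}_M \le \mathcal{D}_H$ by bounding the respective coefficients.

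First I would compute $(\mathcal{F}_L h)_u$ and $(\mathcal{F}_L h)_v$ from $\mathcal{F}_L = I + D^{-1/2}AD^{-1/2}$. Keeping only the coupling through the edge $(u,v)$, I obtain $(\mathcal{F}_L h)_u = h_u + \tfrac{1}{\sqrt{d_u d_v}} h_v$ and symmetrically for $v$; taking the $\ell_2$ distance yields $\mathcal{D}_L = (1-\tfrac{1}{\sqrt{d_u d_v}})\mathcal{D}$. The same local calculation with $\mathcal{F}_H = I - D^{-1/2}AD^{-1/2}$ gives $\mathcal{D}_H = (1+\tfrac{1}{\sqrt{d_u d_v}})\mathcal{D}$. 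Since $\tfrac{1}{\sqrt{d_u d_v}} \in (0,1]$, I immediately conclude $\mathcal{D}_L \le \mathcal{D}$ (contraction) and $\mathcal{D}_H \ge \mathcal{D}$ (amplification), matching the first half of the proposition.

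The main obstacle is the mid-pass case. Because $\mathcal{F}_M = \mathcal{F}_H \mathcal{F}_L = L(2I - L)$ is a product of two one-hop operators, its action at $u$ involves two-hop paths, so the component of $(\mathcal{F}_M h)_u$ that depends on $h_v$ comes only from length-two walks $u \to t \to v$ where $t \in \mathcal{N}_u \cap \mathcal{N}_v$. I would expand $\mathcal{F}_M = I - (D^{-1/2}AD^{-1/2})^{2}$, compute the $(u,v)$ entry as $-\sum_{t \in \mathcal{N}_u \cap \mathcal{N}_v} \tfrac{1}{d_t \sqrt{d_u d_v}}$, track the sign carefully so that after subtracting $(\mathcal{F}_M h)_u$ and $(\mathcal{F}_M h)_v$ the coupling is additive in the distance (this is the sign flip that turns a mid-pass into an amplifier rather than a contractor), and obtain $\mathcal{D}_M = (1 + \tfrac{1}{\sqrt{d_u d_v}} \sum_{t \in \mathcal{N}_u \cap \mathcal{N}_v} \tfrac{1}{d_t})\mathcal{D}$ when common neighbors exist, and $\mathcal{D}_M = \mathcal{D}$ otherwise.

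Finally, to compare $\mathcal{D}_M$ with $\mathcal{D}_H$, I would use the constraint $d_t \ge 2$ stated after (\ref{eq:dis}), so that each common-neighbor contribution $\tfrac{1}{d_t}$ is at most $\tfrac{1}{2}$, and argue that in typical sparse graphs the number of shared neighbors of an adjacent pair is small, so $\sum_t \tfrac{1}{d_t}$ is much less than one; this places the mid-pass coefficient strictly between $1$ (the no-common-neighbor case) and the high-pass coefficient $1+\tfrac{1}{\sqrt{d_u d_v}}$. Combining the three cases gives $\mathcal{D}_L < \mathcal{D} \le \mathcal{D}_M < \mathcal{D}_H$, which is exactly the statement of the proposition. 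The sign bookkeeping in the two-hop mid-pass expansion is where I expect to spend the most care, since the naive intuition that ``two high-pass-like operations amplify more'' is wrong: the composition with the low-pass part cancels the leading one-hop term and leaves only the small two-hop residual.
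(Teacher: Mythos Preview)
Your structural plan matches the paper's: derive the three distance formulas in Equation~(\ref{eq:dis}) from the spatial forms of the filters and then compare the prefactors. The paper does exactly this and phrases the final conclusion as $\|\mathcal{D}_{M}-\mathcal{D}\| < \|\mathcal{D}_{L}-\mathcal{D}\| = \|\mathcal{D}_{H}-\mathcal{D}\|$, which is equivalent to your ordering once one notes $\mathcal{D}_{M}\ge \mathcal{D}$ and $|\mathcal{D}_{L}-\mathcal{D}|=|\mathcal{D}_{H}-\mathcal{D}|$.

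The one substantive difference is how the crucial bound $\sum_{t\in\mathcal{N}_u\cap\mathcal{N}_v}\tfrac{1}{d_t}<1$ is justified. Your argument combines $d_t\ge 2$ with an informal appeal to ``typical sparse graphs,'' but $d_t\ge 2$ alone does not suffice (three common neighbors of degree $2$ already give $\tfrac32$), and the sparsity remark is not a proof. The paper instead isolates this step as Lemma~\ref{lem:dt}, an \emph{expectation} statement $\mathbb{E}\bigl(\sum_t \tfrac{1}{d_t}\bigr)<1$, and proves it by observing that the expectation is maximized on a complete graph, where it equals $\tfrac{N-2}{N-1}<1$. So the proposition is only claimed in expectation over the neighbor $v$, not pointwise; you should replace your heuristic last paragraph with that lemma (or something equivalent) to close the argument. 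Everything else in your plan, including the two-hop expansion of $\mathcal{F}_M = I - (D^{-1/2}AD^{-1/2})^{2}$ and the sign bookkeeping, is the right route.
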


To prove this proposition, we first introduce Lemma~\ref{lem:dt} and its proof is in the appendix.
\begin{myLem}
\label{lem:dt}
Given a node $u$, the following inequality holds:

\begin{equation}
    \mathbb{E}\left(\sum_{t}\frac{1}{d_{t}}\right) < 1 \,\text{with } t\in \mathcal{N}_{u} \,\text{and } t\in \mathcal{N}_{v}
\end{equation}
\end{myLem}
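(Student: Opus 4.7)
The plan is to cast the expectation as a double sum over a suitable random choice, swap the order of summation, and exploit an elementary degree identity to collapse the $1/d_t$ factors. The lemma statement leaves the randomness implicit, but the natural reading is that $v$ is drawn from some distribution related to $u$ --- either uniformly from $V$, or uniformly from $\mathcal{N}_u$ (since in the application $v$ is the other endpoint of an edge incident to $u$). I would first fix this interpretation explicitly before starting the calculation.

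Under the uniform-over-$V$ reading, I would write
\begin{equation*}
\mathbb{E}_v\!\left[\sum_{t \in \mathcal{N}_u \cap \mathcal{N}_v} \frac{1}{d_t}\right] \;=\; \frac{1}{n}\sum_{v \in V}\sum_{t \in \mathcal{N}_u \cap \mathcal{N}_v} \frac{1}{d_t}
\end{equation*}
and then swap the order of summation so that $t$ runs over $\mathcal{N}_u$ and $v$ runs over $\mathcal{N}_t$. The inner sum counts the number of $v$'s with $t \in \mathcal{N}_v$, which is exactly $d_t$, so the $d_t$'s cancel and the whole expression collapses to $d_u/n \le (n-1)/n < 1$. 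This gives the bound in one line.

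If instead $v$ is taken uniformly from $\mathcal{N}_u$ (arguably more faithful to the context, because $(u,v)$ is an edge in the preceding distance formula), the same swap yields $\frac{1}{d_u}\sum_{t \in \mathcal{N}_u}\frac{|\mathcal{N}_u \cap \mathcal{N}_t|}{d_t}$. Here the key observation is that $u \in \mathcal{N}_t$ (since $t \in \mathcal{N}_u$) while $u \notin \mathcal{N}_u$, so $|\mathcal{N}_u \cap \mathcal{N}_t| \le d_t - 1$. Substituting gives $\tfrac{1}{d_u}\sum_{t \in \mathcal{N}_u}(1 - 1/d_t) = 1 - \tfrac{1}{d_u}\sum_{t \in \mathcal{N}_u} 1/d_t$, which is strictly less than $1$ because every $d_t \ge 2$ makes the subtracted term positive.

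The main obstacle is essentially interpretive rather than computational: pinning down what the expectation ranges over. Once that is committed to, the proof is a clean Fubini-style swap plus a one-step degree estimate. A secondary subtlety is ensuring the inequality is strict; this is why I prefer the $|\mathcal{N}_u \cap \mathcal{N}_t| \le d_t - 1$ form, which isolates a manifestly positive correction term coming from the self-loop-free assumption, rather than relying on the looser $d_u/n \le (n-1)/n$ bound that degenerates when $u$ is adjacent to every other vertex.
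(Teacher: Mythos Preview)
Your second argument (with $v$ uniform over $\mathcal{N}_u$) is correct and matches the intended setting, since in the paper $v$ is always an edge-neighbor of $u$. The Fubini swap and the observation $|\mathcal{N}_u \cap \mathcal{N}_t| \le d_t - 1$ (forced by $u \in \mathcal{N}_t \setminus \mathcal{N}_u$) give the strict bound cleanly.

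This is a genuinely different route from the paper's own proof. The paper argues extremally: it asserts that the quantity is maximized when $u$, $v$ share as many common neighbors as possible, declares the complete graph $K_N$ to be that worst case, and then computes $\sum_t 1/d_t = (N-2)/(N-1) < 1$ there. Your approach never passes through an extremal configuration; it bounds the expectation directly for an arbitrary graph and in fact yields the sharper inequality $\mathbb{E}\big[\sum_t 1/d_t\big] \le 1 - \tfrac{1}{d_u}\sum_{t \in \mathcal{N}_u} 1/d_t$, which specializes to $(N-2)/(N-1)$ on $K_N$. The paper's argument is more heuristic---the step ``more common neighbors $\Rightarrow$ larger expectation $\Rightarrow$ complete graph is worst'' is not justified, since adding edges can simultaneously raise the degrees $d_t$ in the denominator---whereas your degree identity closes that gap without any case analysis. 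Your first interpretation ($v$ uniform over $V$) is also valid mathematics but, as you note, less faithful to the context; it is fine to mention but the second argument is the one to keep.
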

Thus, we can conclude from Equation~\ref{eq:dis} that $\|\mathcal{D}_{M} -\mathcal{D}  \|< \|\mathcal{D}_{L} -\mathcal{D}\| = \|\mathcal{D}_{H} -\mathcal{D}\|$. Furthermore, we can imagine that a low-pass filter is appropriate for graphs where neighbors of all nodes are of the same class as it can drive the representations of connected nodes to be more similar. A high-pass filter suits graphs where neighbor nodes are similar in feature but belong to different classes. In this case, high-pass filtering is required to increase the degree of discrimination. However, these properties are not related to our concerning robustness.
As for the mid-pass filter, one of its advantages is the stability of connected node distance regardless of the graph. We then provide a theory from the perspective of structural attacks.
\begin{myDef}
	The distance change rate of the connected node is defined as:
 \begin{equation}
	\label{eq:rate}
	\mathcal{R}(\Delta \mathcal{D}) = \left |\frac{\partial \mathcal{D}}{\partial d}\right |.
\end{equation}
\end{myDef}
And when the distance of the connected nodes is less affected by the structural attack, i.e., the distance change rate is lower, the model is supposed to be more robust. We have the following theorem:
\begin{mytheory}
\label{th:1}
	When the graph structure is perturbed, the distance change rate of the connected nodes via the mid-frequency signals is minimum compared with the one of low-frequency or high-frequency, which can 
 be denoted as $\mathcal{R}_{M}(\Delta \mathcal{D}) < \mathcal{R}_{H}(\Delta \mathcal{D}) = \mathcal{R}_{L}(\Delta \mathcal{D})$.
\end{mytheory}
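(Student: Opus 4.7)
The proof plan is to apply the definition of the distance change rate $\mathcal{R}(\Delta\mathcal{D}) = |\partial \mathcal{D}/\partial d|$ directly to the three explicit forms of $\mathcal{D}_L$, $\mathcal{D}_H$, and $\mathcal{D}_M$ derived in Equation~\ref{eq:dis}, and then to compare the resulting sensitivities by invoking Lemma~\ref{lem:dt}. Since each filtered distance is a scalar function of the endpoint degrees $d_u$ and $d_v$, I will differentiate with respect to $d_u$ (the $d_v$ case is symmetric) while treating the unfiltered distance $\mathcal{D} = \|h_u - h_v\|_2$ as a constant with respect to the degree argument.

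First, I would handle the low-pass and high-pass cases together. Both $\mathcal{D}_L$ and $\mathcal{D}_H$ take the form $(1 \pm (d_u d_v)^{-1/2})\,\mathcal{D}$, so a direct differentiation yields
\begin{equation*}
\left|\frac{\partial \mathcal{D}_L}{\partial d_u}\right| = \left|\frac{\partial \mathcal{D}_H}{\partial d_u}\right| = \frac{\mathcal{D}}{2\, d_u\sqrt{d_u d_v}},
\end{equation*}
which already delivers the equality $\mathcal{R}_L(\Delta\mathcal{D}) = \mathcal{R}_H(\Delta\mathcal{D})$, since the sign difference between the two filters vanishes under the absolute value.

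Next, I would handle the mid-pass case by splitting on the piecewise definition of $\mathcal{D}_M$. When $u$ and $v$ share no common neighbor, $\mathcal{D}_M = \mathcal{D}$ is independent of $d_u$, so $\mathcal{R}_M = 0$ and the strict inequality is immediate. Otherwise, differentiating the extra factor $\sum_t 1/d_t$ that sits inside the $(d_u d_v)^{-1/2}$ coefficient yields
\begin{equation*}
\left|\frac{\partial \mathcal{D}_M}{\partial d_u}\right| = \frac{\mathcal{D}}{2\, d_u\sqrt{d_u d_v}}\sum_{t}\frac{1}{d_t}.
\end{equation*}
Applying Lemma~\ref{lem:dt}, which guarantees $\mathbb{E}(\sum_t 1/d_t) < 1$ over the common neighbors, then collapses to the desired strict inequality $\mathcal{R}_M(\Delta\mathcal{D}) < \mathcal{R}_L(\Delta\mathcal{D}) = \mathcal{R}_H(\Delta\mathcal{D})$.

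The main obstacle I anticipate is bridging the gap between the continuous sensitivity $\partial/\partial d$ in the definition and the inherently discrete nature of structural attacks, which flip edges rather than deform degrees smoothly. I would resolve this by interpreting the derivative as a first-order sensitivity that captures the marginal effect of adding or removing a single neighbor, which is the standard convention in spectral robustness analyses of GNNs. A minor related subtlety is that Lemma~\ref{lem:dt} is stated in expectation, so the conclusion of Theorem~\ref{th:1} should be read as a statement about the expected change rate across perturbed edges, which is precisely the quantity relevant to average-case adversarial robustness anyway.
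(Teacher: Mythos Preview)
Your proposal is correct and follows essentially the same argument as the paper: differentiate the explicit expressions from Equation~\ref{eq:dis}, observe that the low- and high-pass rates coincide after taking absolute values, and then invoke Lemma~\ref{lem:dt} to bound the extra factor $\sum_t 1/d_t$ in the mid-pass case. The only cosmetic difference is that the paper sets $d = d_u d_v$ and differentiates with respect to this product (obtaining $\tfrac{1}{2}d^{-3/2}\mathcal{D}$), whereas you differentiate with respect to $d_u$ alone; the ratio between the three rates is identical either way, so the comparison and the conclusion are unchanged.
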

\begin{proof}
For convenience, we let $d = d_{u}d_{v}$, and the distance change rate $\mathcal{R}(\Delta\mathcal{D})$ under each filter can be easily calculated according to Equation~\ref{eq:rate}. 
\noindent For the low-pass filter,
$$ \mathcal{R}_{L}(\Delta \mathcal{D})=\left |\frac{\partial \mathcal{D}_{L}}{\partial d}\right | = \left |\frac{\partial }{\partial d}(1-\frac{1}{\sqrt{d}})\mathcal{D}\right |
= \frac{1}{2}d^{-\frac{3}{2}} \mathcal{D}.$$
For the high-pass filter,
$$ \mathcal{R}_{H}(\Delta \mathcal{D})=\left |\frac{\partial\mathcal{D}_{H}}{\partial d} \right | = \left |\frac{\partial }{\partial d}(1+\frac{1}{\sqrt{d}})\mathcal{D}\right |
= \frac{1}{2}d^{-\frac{3}{2}}\mathcal{D}. $$
For the mid-pass filter,
\begin{equation}
    \begin{split}
        \mathcal{R}_{M}(\Delta \mathcal{D})&=\left |\frac{\partial \mathcal{D}_{M}}{\partial d}\right | \\
        &= 
\begin{cases}
	\frac{1}{2}d^{-\frac{3}{2}}\mathcal{D}\sum_{t}\frac{1}{d_{t}} & \text{if }   t\in \mathcal{N}_{u} \,\text{and } t\in \mathcal{N}_{v},\\ 
 0 & \text{otherwise.}
\end{cases} 
    \end{split}
\end{equation}

Since $\mathbb{E}\left(\sum_{t}\frac{1}{d_{t}}\right) < 1$, so we have $\mathcal{R}_{L}(\Delta \mathcal{D})= \mathcal{R}_{H}(\Delta \mathcal{D}) > \mathcal{R}_{M}(\Delta \mathcal{D})$. Then the theorem is proven. 
\end{proof}
\noindent\textbf{Remark1.} \emph{When the structure is disturbed, the mid-pass filter has the least effect on the distance of the connected nodes.}


\noindent\textbf{Observation.} Theory~\ref{th:1} proves that the mid-pass filter can maintain the robustness of distance between connected nodes under structural attacks. We then carry out experiments to verify this theory. We calculate the average distance change rate of connected nodes on real datasets. The evaluation metric is formulated as:
$$
\frac{\left | \Bar{\mathcal{D}_{0}} - \Bar{\mathcal{D}_{25}} \right |}{max(\Bar{\mathcal{D}_{0}} , \Bar{\mathcal{D}_{25})}}
$$
where $\Bar{\mathcal{D}_{i}}$ represents the average Euclidean distance of the connected nodes with a perturbation rate of $i\%$.
The result is shown in Table~\ref{tb:change_rate}. We can see that after the graph is perturbed, the distance change rate of connected nodes for Mid-GCN is only about half of that of GCN, which further verifies the robustness of the mid-frequency signals.

 \begin{table}[htbp]
 	\centering
 	\caption{Distance change rate of connected nodes whose degree is greater than 10 under a \emph{metattack} perturbation rate of 25\%.}
 	\label{tb:change_rate}
 	\begin{tabular}{ccc}
 		\toprule
 		\textbf{Datasets} & \textbf{GCN} & \textbf{Mid-GCN}\\
 		\midrule
 		Cora & 5.36\% & 2.66\% \\
 		Citeseer & 3.92\% & 2.40\% \\
 		Github & 6.49\% & 4.15\% \\
 		\bottomrule
 	\end{tabular}
 \end{table}


\subsection{Analysis of Structural Attack Influence on Spectral Domain}
\label{sec:theo2}
As graph filters directly impact signals on the spectral domain, we further investigate how perturbations over an edge can affect the spectral domain at low, mid and high frequencies. If an edge is inserted between disconnected node pair $(u,v)$, or the edge connecting $u$ and $v$ is deleted, the eigenvalues of graphs will be affected. Mathematically, we introduce Lemma~\ref{lem:per} to measure the differences in eigenvalues after structural attacks.

\begin{myLem}
	\label{lem:per}
	Given a graph $\mathcal{G} = (V, E)$ and one edge $e_{u,v}$ to be perturbed, the change of y-th eigenvalue of the normalized adjacency matrix $\widehat{A}$ after the perturbation is formulated as~\cite{LFR}

\begin{equation}
	\Delta \lambda_{y} = \begin{cases}
		2U_{y,u}\cdot U_{y,v} - \lambda_{y}(U^{2}_{y,u} + U^{2}_{y, v})  & \text{if } E\cup \{e_{u,v}\} \\ 
		-2U_{y,u}\cdot U_{y,v} + \lambda_{y}(U^{2}_{y,u} + U^{2}_{y, v}) & \text{if } E \setminus \{e_{u,v}\},
	\end{cases}
\end{equation}
where $U_{y, u}$ refers to the $u$-th element of eigenvector $U_{y}$ of $\widehat{A}$. Note that eigenvalues of $\widehat{A}$  discussed here range from -1 to 1.
\end{myLem}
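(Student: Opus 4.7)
The plan is to apply first-order matrix perturbation theory to a conveniently chosen reformulation of the eigenproblem $\widehat{A} U_y = \lambda_y U_y$. Working directly with $\widehat{A} = D^{-1/2} A D^{-1/2}$ is awkward because an edge flip changes both $A$ \emph{and} $D$, so $D^{-1/2}$ itself must be differentiated. To sidestep this, I would recast the spectral problem as the generalized eigenproblem $A\, x_y = \lambda_y D\, x_y$ with $x_y = D^{-1/2} U_y$ and the $D$-normalization $x_y^{T} D\, x_y = 1$. This reformulation has the same eigenvalues as $\widehat{A}$ while turning the edge modification into simple low-rank updates of $A$ and $D$ separately.

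Next, I would describe the perturbations explicitly. For an insertion of $e_{u,v}$, the only changed entries give
\begin{equation*}
\Delta A = e_u e_v^{T} + e_v e_u^{T}, \qquad \Delta D = e_u e_u^{T} + e_v e_v^{T},
\end{equation*}
and a deletion produces the same updates with opposite signs. Writing the perturbed generalized eigenproblem $(A+\Delta A)(x_y+\Delta x_y) = (\lambda_y + \Delta \lambda_y)(D + \Delta D)(x_y + \Delta x_y)$, keeping only first-order quantities, and left-multiplying by $x_y^{T}$, the cross terms $x_y^{T} A\, \Delta x_y$ and $\lambda_y\, x_y^{T} D\, \Delta x_y$ cancel by the unperturbed equation $A x_y = \lambda_y D x_y$. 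What remains is the clean Rayleigh-type identity
\begin{equation*}
\Delta \lambda_y \;=\; x_y^{T}(\Delta A)\, x_y \;-\; \lambda_y\, x_y^{T}(\Delta D)\, x_y.
\end{equation*}
Substituting the rank-two $\Delta A$ and the diagonal $\Delta D$ immediately gives $2\, x_{y,u} x_{y,v} - \lambda_y (x_{y,u}^{2}+x_{y,v}^{2})$, which is the formula stated in the lemma (with $U_{y,u}$ interpreted as the relevant coordinate of the eigenvector associated to $\lambda_y$), and the sign flip for deletion is automatic.

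The main obstacle I expect is bookkeeping around the normalization: one must verify that the cross terms really do cancel under the $D$-orthogonality convention $x_y^{T} D\, x_y = 1$, and that the same cancellation goes through when $D$ itself is being perturbed (so the normalization of $x_y + \Delta x_y$ is with respect to $D + \Delta D$). A subsidiary issue is that the identity is a first-order statement: it is exact for a single edge flip in the non-degenerate case, but one should explicitly flag that higher-order $O(\|\Delta A\|^2 + \|\Delta D\|^2)$ corrections are being discarded, which is consistent with how the lemma is used downstream to reason about spectral changes at low, mid, and high frequencies.
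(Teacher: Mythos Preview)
Your derivation via the generalized eigenproblem $A x_y = \lambda_y D x_y$ is correct and is the standard route to this kind of eigenvalue-sensitivity formula. The paper itself does not prove Lemma~\ref{lem:per} at all: it is imported verbatim from the LFR reference and then used as a black box inside the proof of Theorem~\ref{th:2}. So there is no ``paper's own proof'' to compare against; what you have written is essentially the argument one finds in the cited source.

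One point worth making explicit rather than leaving in a parenthetical: the quantities that actually emerge from your computation are $x_{y,u} = d_u^{-1/2} U_{y,u}$ and $x_{y,v} = d_v^{-1/2} U_{y,v}$, not the entries $U_{y,u}, U_{y,v}$ of the orthonormal eigenvector of $\widehat{A}$ as the lemma's wording suggests. Concretely, your first-order identity gives
\[
\Delta\lambda_y \;=\; 2\,x_{y,u}x_{y,v} - \lambda_y\bigl(x_{y,u}^2 + x_{y,v}^2\bigr),
\]
and one can check directly (by expanding $\Delta\widehat{A}$ through $(D+\Delta D)^{-1/2}\approx D^{-1/2}-\tfrac12 D^{-3/2}\Delta D$ and applying ordinary symmetric perturbation theory) that the same expression in $x_y$-coordinates appears. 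So either the lemma is tacitly using the generalized eigenvector normalization, or it is absorbing the $d_u^{-1/2}$ factors into the symbol $U_{y,u}$. This does not affect the downstream use in Theorem~\ref{th:2}, which only needs the qualitative sign and magnitude behavior across frequency bands, but you should state the convention clearly rather than hedge with ``interpreted as the relevant coordinate.''
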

Then we have the following theorem:
\begin{mytheory}
\label{th:2}
	Suppose $\lambda_{l}\in[p, 1], \lambda_{m}\in (-p, p), \lambda_{h}\in [-1,-p]$ is eigenvalue of low/mid/high-frequency signals, respectively, where $p \in (0, 1)$. Inequality $\left |\mathbb{E}( \Delta \lambda_{m} )\right | < \left |\mathbb{E}( \Delta \lambda_{h} )\right |$ as well as $\left |\mathbb{E}( \Delta \lambda_{m} )\right | < \left |\mathbb{E}( \Delta \lambda_{l}) \right |$ always holds.
\end{mytheory}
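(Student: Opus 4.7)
The plan is to combine Lemma~\ref{lem:per} with a randomness assumption on the perturbed edge $e_{u,v}$ in order to turn the pointwise expression for $\Delta\lambda_y$ into something that depends only on $|\lambda_y|$ and a uniform constant. Concretely, I would model the attacker's choice of the endpoint pair $(u,v)$ as a random draw (e.g.\ two indices drawn uniformly and independently from $\{1,\dots,n\}$, or uniformly over candidate edges), so that the cross term $U_{y,u}U_{y,v}$ and the diagonal terms $U_{y,u}^{2}+U_{y,v}^{2}$ become random variables whose expectations can be computed directly from the eigenvector $U_{y}$.

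Next I would compute these two expectations. Because $U_{y}$ is a unit eigenvector, $\sum_{i}U_{y,i}^{2}=1$, and so under the uniform draw
\begin{equation*}
\mathbb{E}\!\left[U_{y,u}^{2}+U_{y,v}^{2}\right]=\tfrac{2}{n}.
\end{equation*}
For the cross term, $\mathbb{E}[U_{y,u}U_{y,v}]=(\tfrac{1}{n}\sum_{i}U_{y,i})^{2}$, which is either zero (for eigenvectors orthogonal to $\mathbf{1}$) or at most of order $1/n$ in the generic case; in either setting it is dominated by $\tfrac{2}{n}$ and, crucially, does not depend on $\lambda_{y}$. Plugging these two facts into the expression of Lemma~\ref{lem:per} yields, for both the insertion and deletion cases,
\begin{equation*}
\bigl|\mathbb{E}(\Delta\lambda_{y})\bigr|\;=\;|\lambda_{y}|\cdot\mathbb{E}\!\left[U_{y,u}^{2}+U_{y,v}^{2}\right]\;-\;O(1/n)\;=\;\tfrac{2|\lambda_{y}|}{n}\pm o(1/n),
\end{equation*}
so the expected magnitude of the eigenvalue shift is, up to a common frequency-independent factor, exactly $|\lambda_{y}|$.

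With that reduction, the conclusion is immediate from the three frequency bands specified in the statement: for $\lambda_{l}\in[p,1]$ and $\lambda_{h}\in[-1,-p]$ we have $|\lambda_{l}|,|\lambda_{h}|\ge p$, whereas for $\lambda_{m}\in(-p,p)$ we have $|\lambda_{m}|<p$. Hence $|\mathbb{E}(\Delta\lambda_{m})|<|\mathbb{E}(\Delta\lambda_{l})|$ and $|\mathbb{E}(\Delta\lambda_{m})|<|\mathbb{E}(\Delta\lambda_{h})|$, which is the desired inequality. The absolute-value bookkeeping also shows that the sign of $\Delta\lambda_{y}$ (insertion versus deletion) drops out once one takes $|\cdot|$, so the two cases in Lemma~\ref{lem:per} can be handled uniformly.

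The main obstacle I anticipate is not the algebra but the \emph{justification of the randomness assumption}. The formula in Lemma~\ref{lem:per} is deterministic in $(u,v)$, so without some model of how the attacker selects the edge, $|\mathbb{E}(\Delta\lambda_{y})|$ has no meaning. I would make this assumption explicit (random edge, or equivalently that the attacker's choice is uncorrelated with the entries of $U_{y}$) and argue it is reasonable in the adversarial-attack setting because structural attacks on the graph do not specifically target particular eigenvectors. A secondary subtlety is handling the $\mathbb{E}[U_{y,u}U_{y,v}]$ term cleanly; the cleanest route is to note that eigenvectors other than the Perron one are orthogonal to $\mathbf{1}$ (so this term vanishes), and for the Perron component the prefactor $|\lambda_{y}|$ itself is large and hence consistent with the desired inequality. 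Once these modeling choices are pinned down, the remainder of the proof is the short calculation above.
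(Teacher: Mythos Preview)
Your approach is genuinely different from the paper's. The paper takes the expectation in Lemma~\ref{lem:per} over \emph{existing edges} $(u,v)\in E$, not over uniform random pairs. Under that edge model the cross term $\mathbb{E}[U_{y,u}U_{y,v}]$ is not negligible at all: it is essentially the Rayleigh quotient of $\widehat{A}$ and hence tied to $\lambda_y$. The paper then uses the eigenvalue relation (their Lemma~\ref{lem:spectral_draw}) to collapse $\lambda_y\,\mathbb{E}[U_{y,u}^2+U_{y,v}^2]$ and $2\,\mathbb{E}[U_{y,u}U_{y,v}]$ into the single expression $\bigl|2(\tfrac{N}{2|E|}-1)\,\mathbb{E}(U_{y,u}U_{y,v})\bigr|$, and bounds $|\mathbb{E}(U_{y,u}U_{y,v})|$ across bands via the Rayleigh quotient (their Lemma~\ref{lem:sig}). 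So the paper reduces to the \emph{cross} term, whereas you reduce to the \emph{diagonal} term; both ultimately land on ``proportional to $|\lambda_y|$,'' but through different mechanisms and different randomness models.

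Your route is shorter but has a real gap at the cross-term step. Under independent uniform sampling, $\mathbb{E}[U_{y,u}U_{y,v}]=\bigl(\tfrac{1}{n}\sum_i U_{y,i}\bigr)^2$, which you assert is zero (by orthogonality to $\mathbf{1}$) or $o(1/n)$ and frequency-independent. Neither holds in general: eigenvectors of $\widehat{A}=D^{-1/2}AD^{-1/2}$ are orthogonal to $D^{1/2}\mathbf{1}$, not to $\mathbf{1}$, so $\sum_i U_{y,i}$ need not vanish on non-regular graphs; and by Cauchy--Schwarz its square can be as large as $1/n$, the same order as your main term $2|\lambda_y|/n$. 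Precisely in the mid-frequency band, where $|\lambda_m|$ is small, this residual can dominate, and it certainly varies with $y$, so calling it ``frequency-independent'' is not justified. To make your argument go through you would need either a regularity assumption on the graph or a uniform-in-$y$ bound on $\bigl(\sum_i U_{y,i}\bigr)^2$ that is strictly smaller than $p/n$; the paper avoids this issue entirely by averaging over edges and invoking the Rayleigh-quotient identity.
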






From Theorem~\ref{th:2} we can discover that the expectations of changes in the eigenvalue of mid-frequency signals are lower, indicating the robustness of mid-frequency signals under structural attacks from the perspective of spectral graph analysis. Moreover, we conduct experiments on some graph examples to verify this conclusion.

\noindent\textbf{Case study.} We provide some examples in Figure~\ref{fig:spectral_case} to demonstrate the changes in the graph signals within different frequency bands after attacks. It can be clearly observed that the graph signals within the mid-frequency band are more stable after attacks.

\begin{figure}[t]
	\centering
	\includegraphics[width=1.0\columnwidth]{./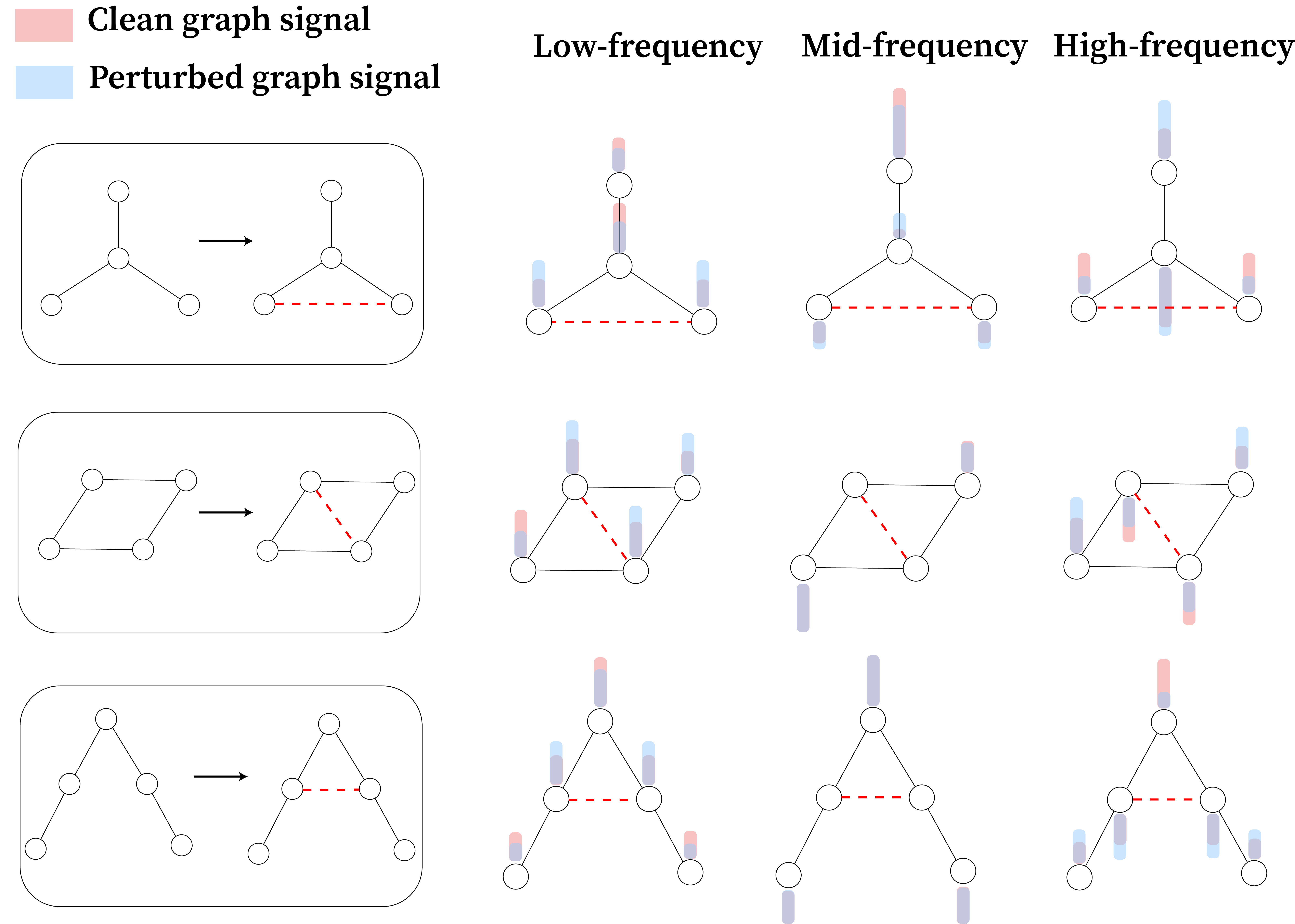}
	\caption{Various frequency bands under different structures, and the red dotted line represents the perturbed edge.}
	\label{fig:spectral_case}
\end{figure}

%

In addition, the mid-pass filter also has desirable properties in some cases; for example, when the structural attacks lead to changes in the homophily of the label, the mid-pass filter shows remarkable generalization ability varying from homophily to heterophily, and the proposition and proof are attached in Appendix~\ref{ap:gen}.

	\section{Experiment}
\label{sec:exp}
In this section, we compare the performance of Mid-GCN with other state-of-the-art models on the node classification task under a variety of graph attacks. Furthermore, we verify its effectiveness on graph data with different properties. 
We also conduct experiments to investigate the model efficiency and impacts of hyperparameter settings.
\subsection{Experiment Settings}
\subsubsection{Datasets.}
Following~\cite{prognn,rgcn}, we evaluate the proposed approach on three benchmark datasets, including two citation graphs (Cora and Citeseer), and one Github graph~\cite{Github}. 
We also select three additional datasets, including a graph without feature Polblogs~\cite{metattack}, a homophilious graph Cora-ML~\cite{cora-ml} and a heterophilious graph Film~\cite{filmdata2009} to show the capacity of Mid-GCN under various datasets. The statistics of datasets are organized in Table~\ref{tb:dataintroduce}. More details of datasets can be found in Appendix~\ref{ap:data-detail}. 
 \begin{table}[htbp]
 	\centering
 	\caption{Statistics of experimental datasets}
 	\label{tb:dataintroduce}
 	\begin{tabular}{cccccc}
 		\toprule
 		\textbf{Datasets} & \textbf{\#Nodes} & \textbf{\#Edges} &  \textbf{\#Features} & \textbf{\#Classes} \\
 		\midrule
 		Cora & 2,485 & 5,069 & 1,433 & 7 \\
 		Citeseer & 2,110 & 3,668 & 3,703 & 6 \\
 		Github & 3,150 & 71,310 & 4,005 & 2 \\
 		Polblogs & 1,222 & 16,714 & / & 2 \\
 		Cora-ML & 2,995 &  4,208 & 2,879 & 5 \\
 		Film & 7,600 & 33,544 & 931 & 5 \\
 		\bottomrule
 	\end{tabular}
 \end{table}

\begin{table*}[htbp]
\centering
\caption{Node classification accuracy under non-targeted attack \emph{Metattack}. \textbf{P(\%)} denotes the perturbation rate. For each result, we report the average performance over 10 runs and the corresponding standard deviation.}
\label{tb:meta}
\begin{tabular}{c|c|cccccccc}
    \toprule
    \textbf{Datasets} & \textbf{P(\%)} & \textbf{GCN} & \textbf{GAT} & \textbf{GCN-Jaccard} & \textbf{GCN-SVD} &
    \textbf{RGCN} &
    \textbf{Pro-GNN} & \textbf{GPRGNN} &
    \textbf{Mid-GCN} \\
    \midrule
    \multirow{6}*{Cora} & 0 & 83.50\scriptsize $\pm$0.44  & 83.97\scriptsize $\pm$0.69 & 82.05\scriptsize $\pm$0.51 & 80.63\scriptsize $\pm$0.45 & 83.09\scriptsize $\pm$0.44 & 83.42\scriptsize $\pm$0.52 & \textbf{85.39\scriptsize $\pm$0.81}& \underline{84.61\scriptsize $\pm$0.46} \\
    ~ & 5 & 76.55\scriptsize $\pm$0.79 & 80.44\scriptsize $\pm$0.74 & 79.13\scriptsize $\pm$0.59 & 78.93\scriptsize $\pm$0.54 & 77.42\scriptsize $\pm$0.39 & \underline{82.78\scriptsize $\pm$0.39} & 81.94\scriptsize $\pm$0.36 & \textbf{82.94\scriptsize $\pm$0.59}\\
    ~ & 10 & 70.39\scriptsize $\pm$1.28 & 75.61\scriptsize $\pm$0.59 &75.16\scriptsize $\pm$0.76 & 71.47\scriptsize $\pm$0.83 & 72.22\scriptsize $\pm$0.38 & \underline{79.03\scriptsize $\pm$0.59}  &76.41\scriptsize $\pm$1.35&  \textbf{80.14\scriptsize $\pm$0.86}\\
    ~ & 15 & 65.10\scriptsize $\pm$0.71 & 69.78\scriptsize $\pm$1.28 & 71.03\scriptsize $\pm$0.64& 66.69\scriptsize $\pm$1.18 & 66.82\scriptsize $\pm$0.39 & \underline{76.40\scriptsize $\pm$1.27} &72.47\scriptsize $\pm$0.47 & \textbf{77.77\scriptsize $\pm$0.75}\\
    ~ & 20 & 59.56\scriptsize $\pm$0.92 &59.54\scriptsize $\pm$0.92 & 65.71\scriptsize $\pm$0.89 &58.94\scriptsize $\pm$1.13 & 59.27\scriptsize $\pm$0.37 & \underline{73.32\scriptsize $\pm$1.56} &61.78\scriptsize $\pm$0.79& \textbf{76.58\scriptsize $\pm$0.29}\\
    ~ & 25 & 47.53\scriptsize $\pm$1.96 & 54.78\scriptsize $\pm$0.74& 60.82\scriptsize $\pm$1.08 & 52.06\scriptsize $\pm$1.19 & 50.51\scriptsize $\pm$0.78 & \underline{69.72\scriptsize $\pm$1.69} &57.16\scriptsize $\pm$1.25 & \textbf{72.89\scriptsize $\pm$0.81} \\
    \midrule
    \multirow{6}*{Citeseer} & 0 & 71.96\scriptsize $\pm$0.55 & 73.26\scriptsize $\pm$0.83 & 72.10\scriptsize $\pm$0.63 & 70.65\scriptsize $\pm$0.32 & 71.20\scriptsize $\pm$0.83 & 73.28\scriptsize $\pm$0.69 & \underline{73.52\scriptsize $\pm$0.23} & \textbf{74.17\scriptsize $\pm$0.28}\\
    ~ & 5 & 70.88\scriptsize $\pm$0.62 & 72.89\scriptsize $\pm$0.83 & 70.51\scriptsize $\pm$0.97 & 68.84\scriptsize $\pm$0.72 & 70.50\scriptsize $\pm$0.43 & \underline{73.09\scriptsize $\pm$0.34} & 72.79\scriptsize $\pm$0.18 & \textbf{74.31\scriptsize $\pm$0.42}\\
    ~ & 10 & 67.55\scriptsize $\pm$0.89 & 70.63\scriptsize $\pm$0.48 & 69.54\scriptsize $\pm$0.56 & 68.87\scriptsize $\pm$0.62 & 67.71\scriptsize $\pm$0.30 & \underline{72.51\scriptsize $\pm$0.75} &71.22\scriptsize $\pm$0.35 & \textbf{73.59\scriptsize $\pm$0.29}\\
    ~ & 15 & 64.52\scriptsize $\pm$1.11 & 69.02\scriptsize $\pm$1.09 & 65.95\scriptsize $\pm$0.94 & 63.26\scriptsize $\pm$0.96 & 65.69\scriptsize $\pm$0.37 & \underline{72.03\scriptsize $\pm$1.11} &69.19\scriptsize $\pm$0.39 & \textbf{73.69\scriptsize $\pm$0.29}\\
    ~ & 20 & 62.03\scriptsize $\pm$3.49 & 61.04\scriptsize $\pm$1.52 & 59.30\scriptsize $\pm$1.40 & 58.55\scriptsize $\pm$1.09 & 62.49\scriptsize $\pm$1.22 & \underline{70.02\scriptsize $\pm$2.28} & 63.10\scriptsize $\pm$0.50 & \textbf{71.51\scriptsize $\pm$0.83}\\
    ~ & 25& 56.94\scriptsize $\pm$2.09 & 61.85\scriptsize $\pm$1.12 & 59.89\scriptsize $\pm$1.47 & 57.18\scriptsize $\pm$1.87 & 55.35\scriptsize $\pm$0.66 & \underline{68.95\scriptsize $\pm$2.78} & 55.61\scriptsize $\pm$0.88& \textbf{69.12\scriptsize $\pm$0.72}\\
    \midrule
    \multirow{6}*{Github} & 0 & 72.92\scriptsize $\pm$0.13 & 72.81\scriptsize $\pm$0.12 & 72.93\scriptsize $\pm$0.56 & 73.31\scriptsize $\pm$1.15 & 73.16\scriptsize $\pm$0.19 & 73.34\scriptsize $\pm$0.84 & \underline{79.17\scriptsize $\pm$1.17} & \textbf{79.51\scriptsize $\pm$0.67}\\
    ~ & 5 & 72.81\scriptsize $\pm$0.07 & 72.43\scriptsize $\pm$1.13 & 71.85\scriptsize $\pm$2.18 & 72.91\scriptsize $\pm$0.12 & 73.09\scriptsize $\pm$0.31 & 72.89\scriptsize $\pm$0.07 &\underline{80.60\scriptsize $\pm$0.81} & \textbf{81.87\scriptsize $\pm$1.46}\\
    ~ & 10 & 72.61\scriptsize $\pm$0.57 & 72.97\scriptsize $\pm$0.13 & 72.63\scriptsize $\pm$0.98 & 72.78\scriptsize $\pm$0.09 & 73.06\scriptsize $\pm$0.16 & 72.75\scriptsize $\pm$0.18 &\underline{80.06\scriptsize $\pm$0.53} & \textbf{81.23\scriptsize $\pm$1.67}\\
    ~ & 15 & 72.97\scriptsize $\pm$0.11 & 72.97\scriptsize $\pm$0.06 & 72.79\scriptsize $\pm$0.51 & 72.97\scriptsize $\pm$1.13 & 73.22\scriptsize $\pm$0.21 &72.98\scriptsize $\pm$0.09 &\underline{80.28\scriptsize $\pm$0.51} & \textbf{80.48\scriptsize $\pm$0.25}\\
    ~ & 20 & 72.11\scriptsize $\pm$2.27 & 70.42\scriptsize $\pm$2.12 & 72.24\scriptsize $\pm$1.96 & 72.47\scriptsize $\pm$0.14 & 73.10\scriptsize $\pm$0.21 & 72.98\scriptsize $\pm$0.13 &\underline{80.19\scriptsize $\pm$0.51} & \textbf{81.08\scriptsize $\pm$0.96}\\
    ~ & 25 & 72.74\scriptsize $\pm$0.41 & 72.97\scriptsize $\pm$1.16 & 72.32\scriptsize $\pm$1.73 & 72.14\scriptsize $\pm$0.07 & 72.91\scriptsize $\pm$0.24 & 72.56\scriptsize $\pm$0.21 &\underline{79.86\scriptsize $\pm$0.87} & \textbf{80.37\scriptsize $\pm$1.51}\\
    \bottomrule
\end{tabular}
\end{table*}

\subsubsection{Baseline Methods.} To verify the effectiveness of Mid-GCN, we compare it with state-of-the-art GCNs that are listed as:

\begin{itemize}[leftmargin=*]
	\item \textbf{GCN}\cite{kipf2017gcn}: GCN is the most classic low-pass filtering GCNs, which updates node representations by aggregating neighbor information.
	
	\item \textbf{GAT}\cite{2018gat}: GAT is a graph neural network model that uses a local self-attention mechanism to generate node representations.
	
	\item \textbf{RGCN}\cite{rgcn}: RGCN represents the features of nodes as Gaussian distribution, which can improve model robustness by penalizing nodes with large variance.
	
	\item \textbf{GCN-SVD}\cite{gcn-svd}: GCN-SVD discovers that the nettack will affect lower singular values of the adjacency matrix and thus restores graphs by adopting low-rank approximation to preserve large singular values above a predefined threshold.   
	
	\item \textbf{GCN-Jaccard}\cite{gcn-jaccard}: GCN-Jaccard deletes adversarial edges that the Jaccard similarity of the node is below a certain threshold, and the graph can be denoised to be a clean graph. This clean graph is taken as the input of GCNs.
	
	\item \textbf{GPRGNN}\cite{GPRGNN2021}: GPRGNN combines Generalized PageRank and GNN to adaptively learn spectral graph filters.
	
	\item \textbf{Pro-GNN}\cite{prognn} Pro-GNN denoises graphs by constraining graph data to be low-rank, sparse, and feature smoothing. It is state-of-the-art defense GCN model. For each experiment, we only report the best results of its two variants (Pro-GNN-fs and Pro-GNN$^{3}$).
	
\end{itemize}

\subsubsection{Experiment Details.} 
We run experiments following the setup of ~\cite{prognn}. For each graph, we randomly choose 10\% of nodes for training, 10\% of nodes for validation and the remaining 80\% of nodes for testing except that the Film is divided according to~\cite{geomGCN}.
For GCN and GAT, we use the default parameters in the original paper. For RGCN, we tune the number of hidden units from \{16, 32, 64, 128\}. For GCN-Jaccard, the similarity threshold is chosen from 0.01 to 0.2. For GCN-SVD, the reduced rank of the perturbed graph is selected from \{5, 10, 15, 50, 100, 150, 200\}. For Pro-GNN, we follow the original settings to adjust the hyperparameters.
For Mid-GCN, we only tune the number of hidden layer dimension from \{64, 128\}, and the filter hyperparameter $\alpha$ from \{0.2, 0.3, 0.5, 055, 0.6, 2.0\}. The dropout rate is set to 0.6 and the learning rate is 0.01. We use Adam as the optimizer for model convergence with a weight decay rate of 5e-4. More reproducibility details are described in Appendix~\ref{ap:hp}.

\subsection{Experiment Results}
To examine the robustness of Mid-GCN, we design three types of attacks, i.e., non-targeted attack, targeted attack and feature attack:
\begin{itemize}[leftmargin=*]
	\item \textbf{Non-targeted attack.} Non-targeted attacks intend to attack the topology structure of the entire graphs, which reduces the overall performance of the GCN model. We adopt state-of-the-art non-targeted attack method \emph{metattack}~\cite{metattack} to perturbed graphs.
	
	\item \textbf{Targeted Attack.} Targeted attacks aim to attack target nodes and thus fool GCNs into misclassifying them, and only calculate the classification accuracy of these attacked nodes. We employ a representative targeted attack method, \emph{nettack}~\cite{nettack}.
	
	\item \textbf{Feature attack.} We are also interested in the performance under feature attacks as the attack on feature is another perturbation on graphs besides the topology attack~\cite{la-gcn,feature-attack2}. Since features of Cora, Citeser and Github only contain 0/1, we randomly flip 0/1 as the feature attack.
\end{itemize}

\begin{figure*}
	\centering
	\subfigure[Cora]{
		\begin{minipage}[t]{0.32\linewidth} 
			\centering
			\includegraphics[scale=0.14]{./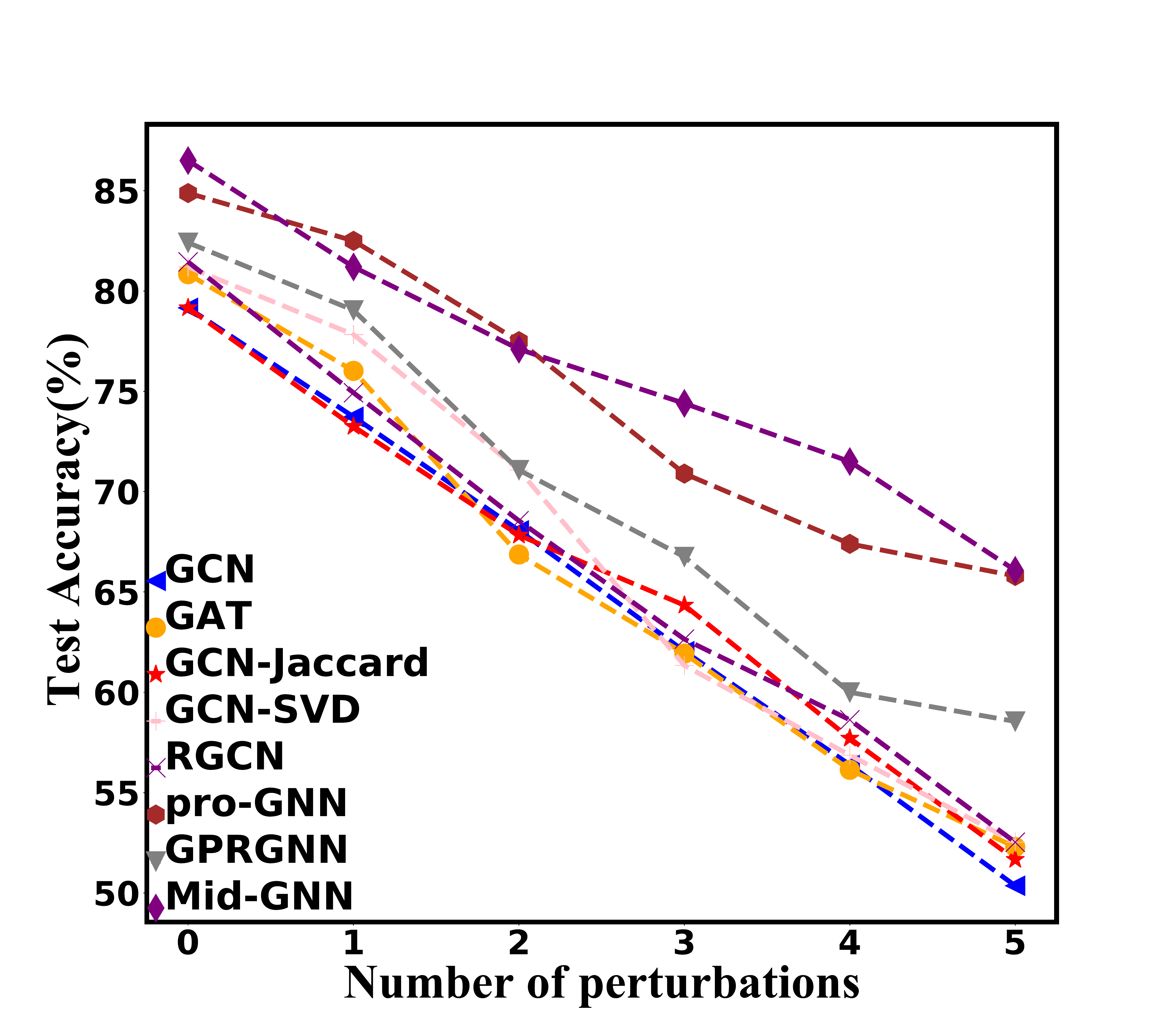} 
		\end{minipage}%
	}\subfigure[Citeseer]{
		\begin{minipage}[t]{0.32\linewidth}
			\centering
			\includegraphics[scale=0.14]{./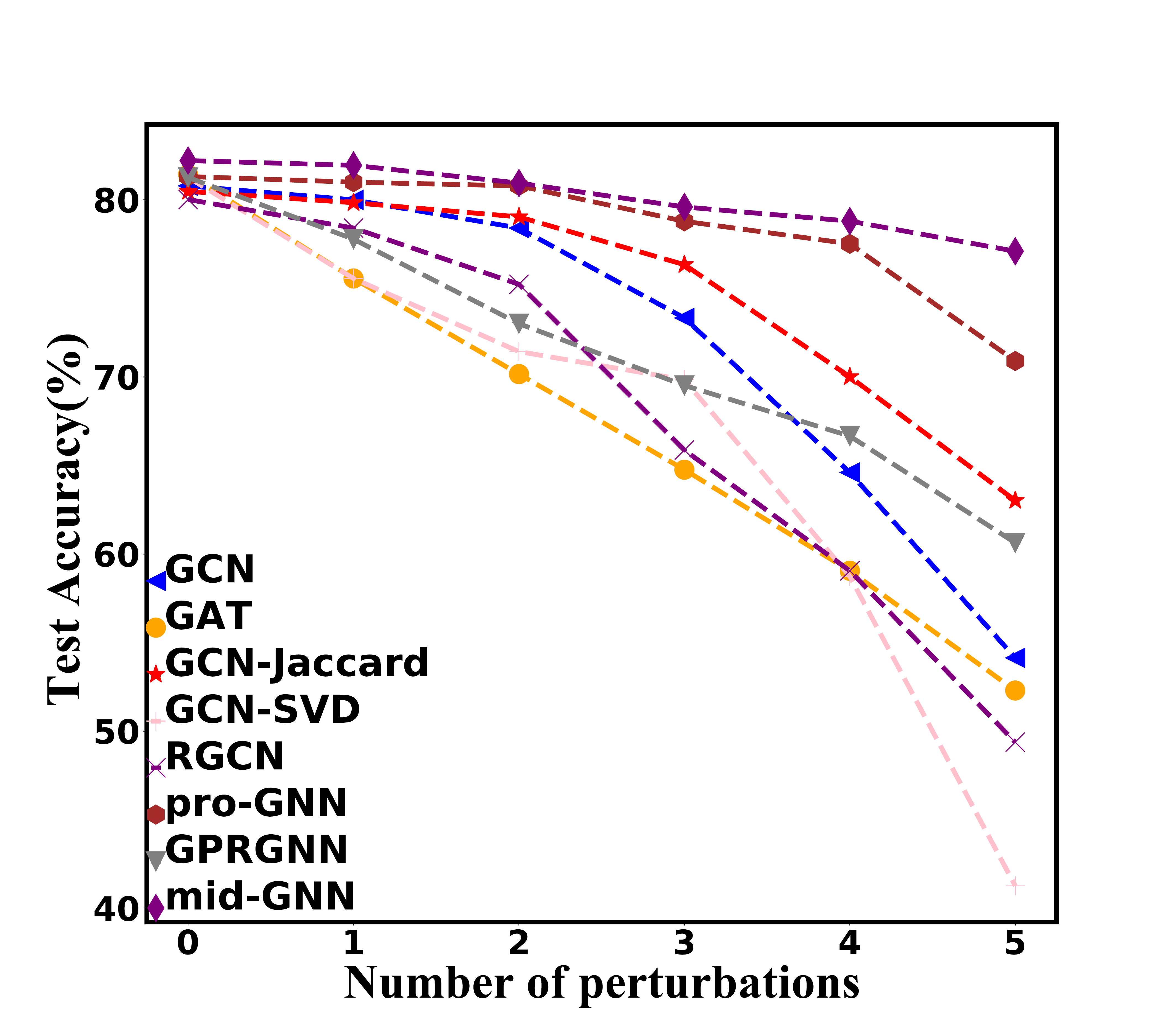}
		\end{minipage}%
	}\subfigure[Github]{
	\begin{minipage}[t]{0.32\linewidth}
		\centering
		\includegraphics[scale=0.14]{./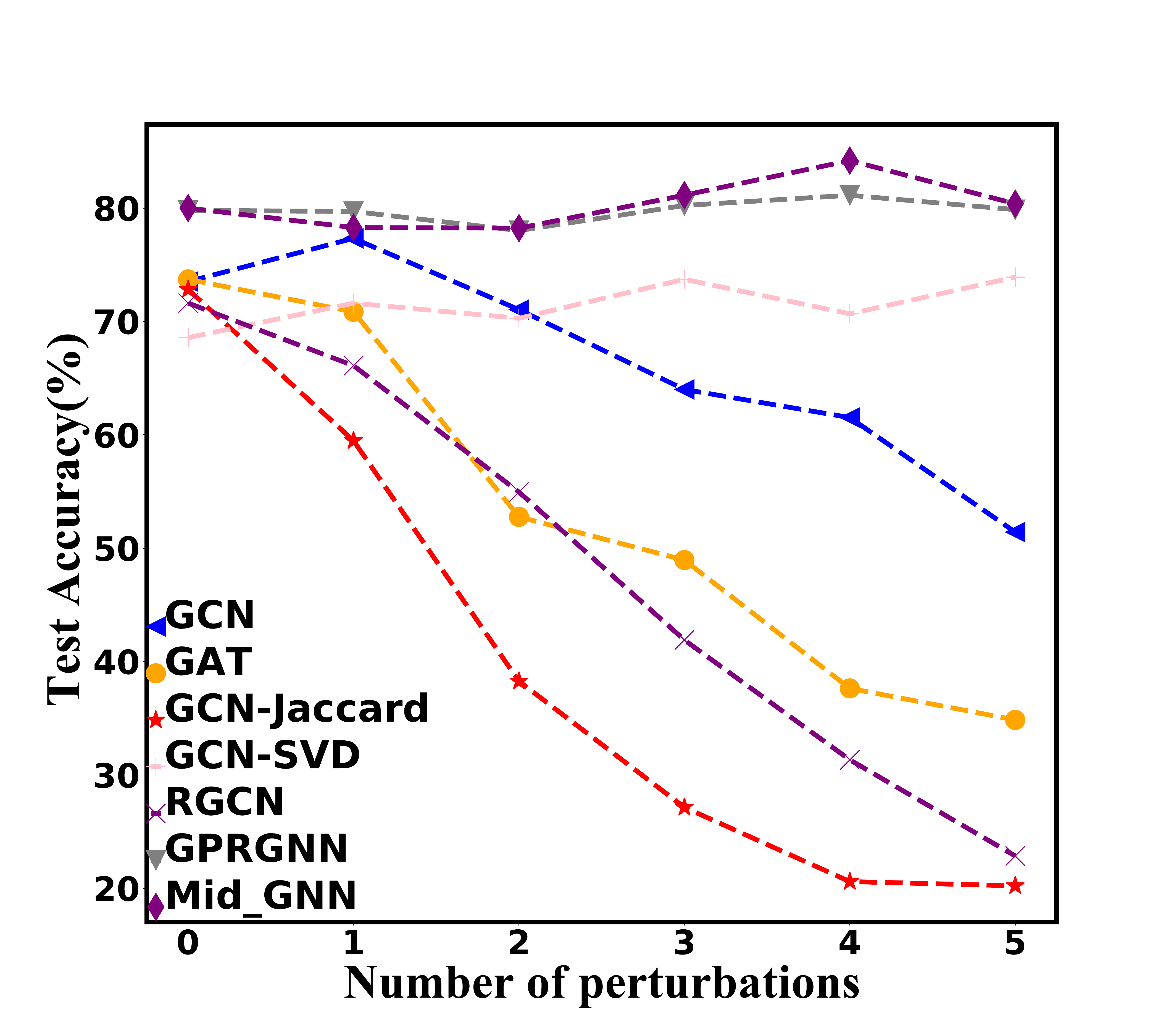}
	\end{minipage}%
}
	\caption{Node classification accuracy under targeted attack \emph{nettack}. Pro-GNN is not reported on Github as it times out against \emph{nettack}.
 }
	\label{fig:nettack}
\end{figure*}

\subsubsection{Against Non-targeted Adversarial Attacks}
We first exhibit the node classification accuracy of each model against non-targeted adversarial attacks under different perturbation rates ranging from 5\% to 25\%, and the vanilla model performance without perturbations is also shown. The default settings of \emph{metattack} are adopted to generate non-targeted adversarial attacks. 
The experimental results are organized in Table~\ref{tb:meta}. Value in bold indicates the best performance and the underlined value is the suboptimal one. Our observations are as follows:

\begin{itemize}[leftmargin=*]
\item Under metattack, our method achieves the best results at all perturbation rates and obtains considerable improvements over the representative low-pass filtering model GCN. Mid-GCN improves the performance of GCN by 18\% on Cora, 10.8\% on Citeseer and 10.7\% on Github under various perturbation rates on average. Comparing SOTA model Pro-GNN, our method also has appreciable improvements under various perturbation rates, with an average increase of 2.1\% in Cora, 1.7\% in Citeseer, and 10.4\% in Github.
In particular, under 25\% perturbation on the three datasets, our model improves the performance over GCN by a margin of 25\%, 13\% and 8\%, respectively.
\item There are many critical assumptions and constraints on data when applying Pro-GNN, such as low-rank, feature smoothing and sparseness, resulting in the poor performance on a dense and high-rank graph Github. GCN-SVD, which is mainly designed for targeted attacks, is also of great difficulty in restoring the graph structure of Github with a large number of global edge perturbations.
GPRGNN that adaptively learns graph filters shows promising achievements on clean graphs without perturbations; however, it can not handle attacks on spare graphs like Cora and Citeseer and the accuracy drops sharply when the perturbation rate increases.
\item An impressive phenomenon that can be seen from results on the Github dataset is that most of the models are less affected by adversarial attacks and is capable of obtaining stable results when the perturbation rate varies. Our method even achieves the best performance with a perturbation rate of 5\% compared with the vanilla one. The main reason is that the Github graph is much dense than other graphs (the average node degree is around 45), and \emph{metattack} tends to add edges~\cite{metattack} which has fewer impacts on a dense graph rather than a sparse graph.
\end{itemize}

\subsubsection{Against Targeted Adversarial Attack.} 
In the targeted attack experiment, we adopt the representative \emph{nettack} method with default settings of original paper~\cite{nettack,prognn}, i.e., we impose perturbations on neighbor nodes of each target node with a step size of 1 and the number of affected nodes is 1 $\sim$ 5. Following~\cite{prognn}, we select nodes whose degree is greater than 10 in the test set as target nodes. Since the Github graph is relatively dense so we sample 8\% of the nodes for attacks. Experimental results are depicted in Figure~\ref{fig:nettack}. We can observe that the mid-pass filtering GCN can always maintain the best performance when the number of targeted attacks increases. There are significant improvements over the low-pass filtering GCN and it becomes larger when the attacks are more severe. 

\subsubsection{Against Feature Attack.}
Apart from structural perturbations on graphs, feature attack is also a critical attack perspective as node features are extensively used by GCNs. We follow~\cite{la-gcn} to randomly flip  0/1 value on each dimension of Cora and Citeseer. We design two experiment settings, i.e., normalized features and non-normalized features after the attacks as the normalization will affect all the dimension of feature values.
Model performance is displayed in Figure~\ref{fig:fea_att} from which we have the following observations:

\begin{itemize}[leftmargin=*]
\item Mid-GCN achieves desirable performance regardless of whether the features are normalized or not, which demonstrates the robustness of Mid-GCN under the feature attacks.
\item Mid-GCN and RGCN are both consistently benefited from the normalization operation, but the impact on other models varies, and it even causes destructive damages to some of them.
For example, on Cora, when the number of perturbations is 70, classification accuracy of GCN, GCN-Jaccard and Pro-GNN with normalized features decreases by 10\%, 38\% and 21\% compared with the non-normalized one, respectively. 
\end{itemize}

\begin{figure}
	\centering
	\subfigure[Normalized Cora feature.]{
		\begin{minipage}[t]{0.55\linewidth} 
			\centering
			\includegraphics[scale=0.11]{./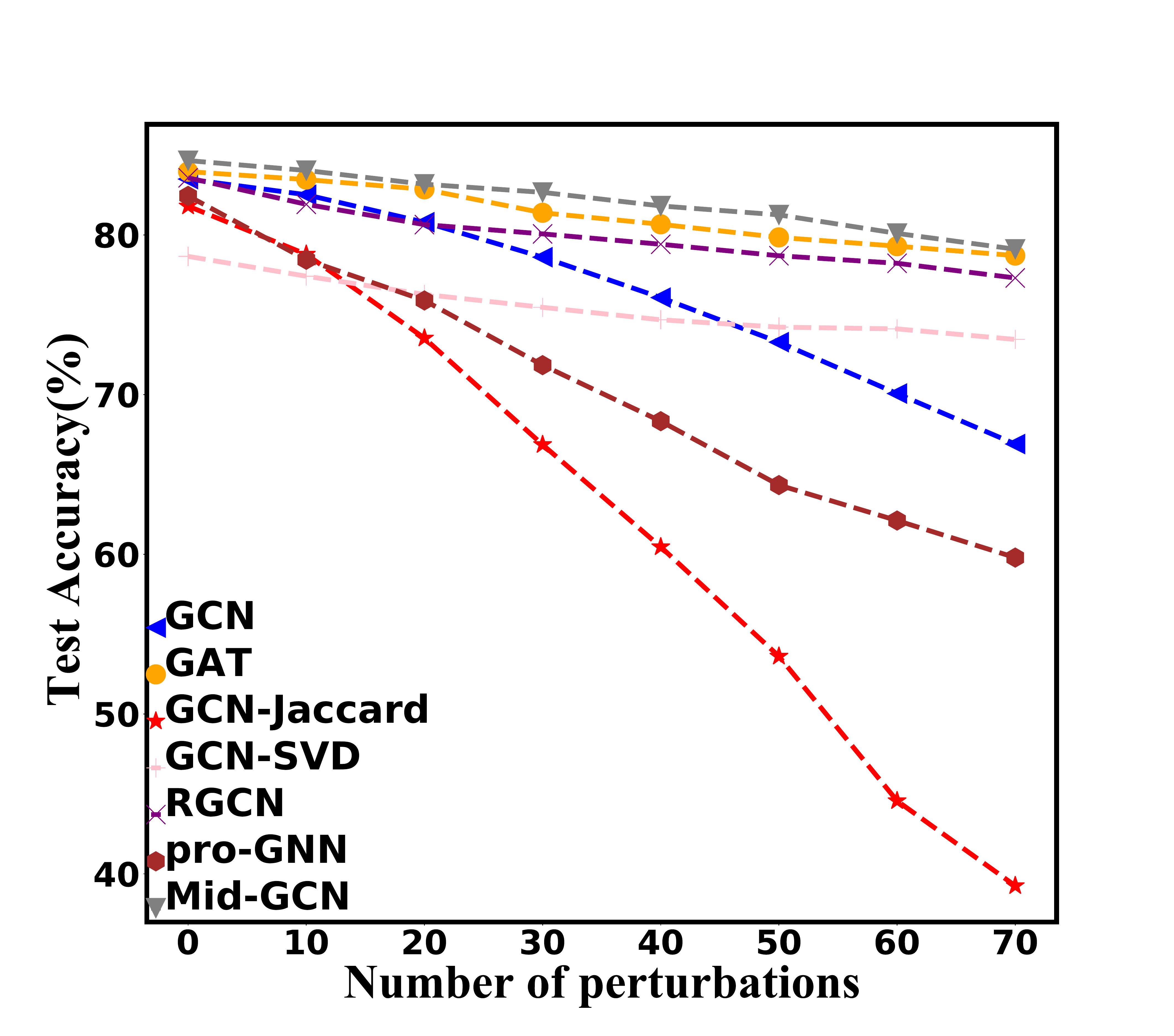} 
		\end{minipage}%
	}\subfigure[Non-normalized Cora feature.]{
		\begin{minipage}[t]{0.55\linewidth}
			\centering
			\includegraphics[scale=0.11]{./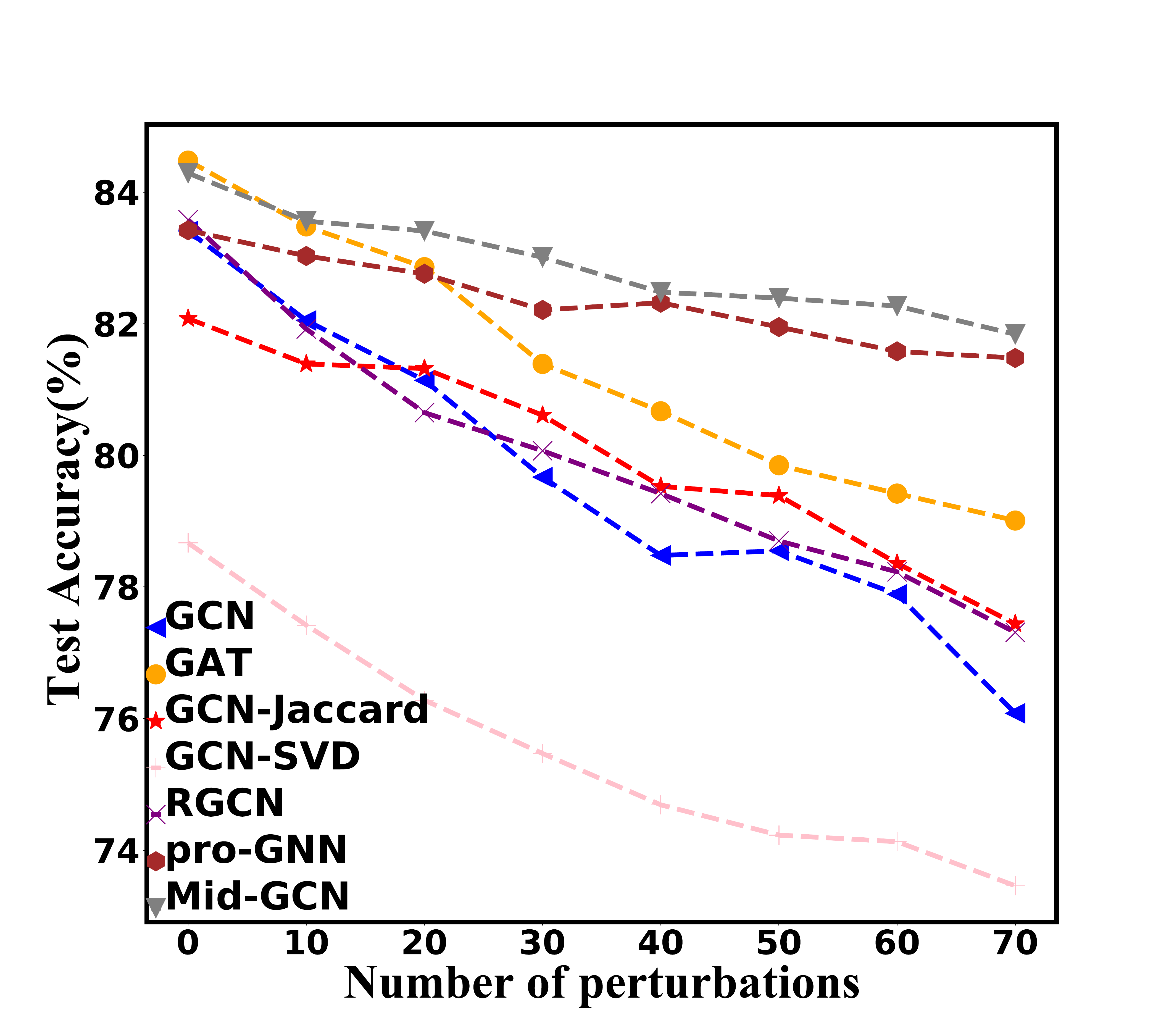}
		\end{minipage}%
	}
\subfigure[Normalized Citeseer feature.]{
		\begin{minipage}[t]{0.55\linewidth}
			\centering
			\includegraphics[scale=0.11]{./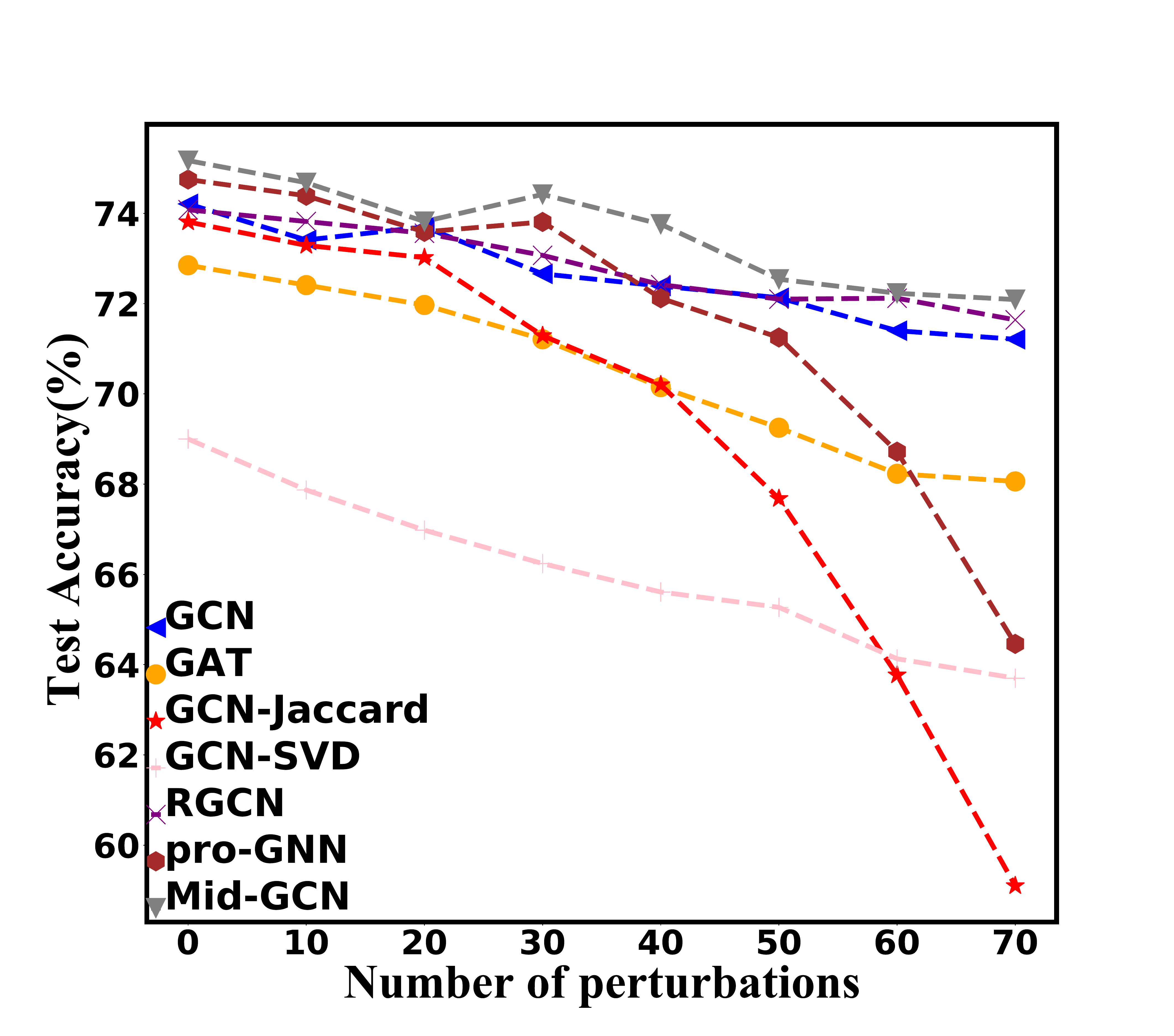}
		\end{minipage}%
	}\subfigure[Non-normalized Citeseer feature.]{
	\begin{minipage}[t]{0.55\linewidth}
		\centering
		\includegraphics[scale=0.11]{./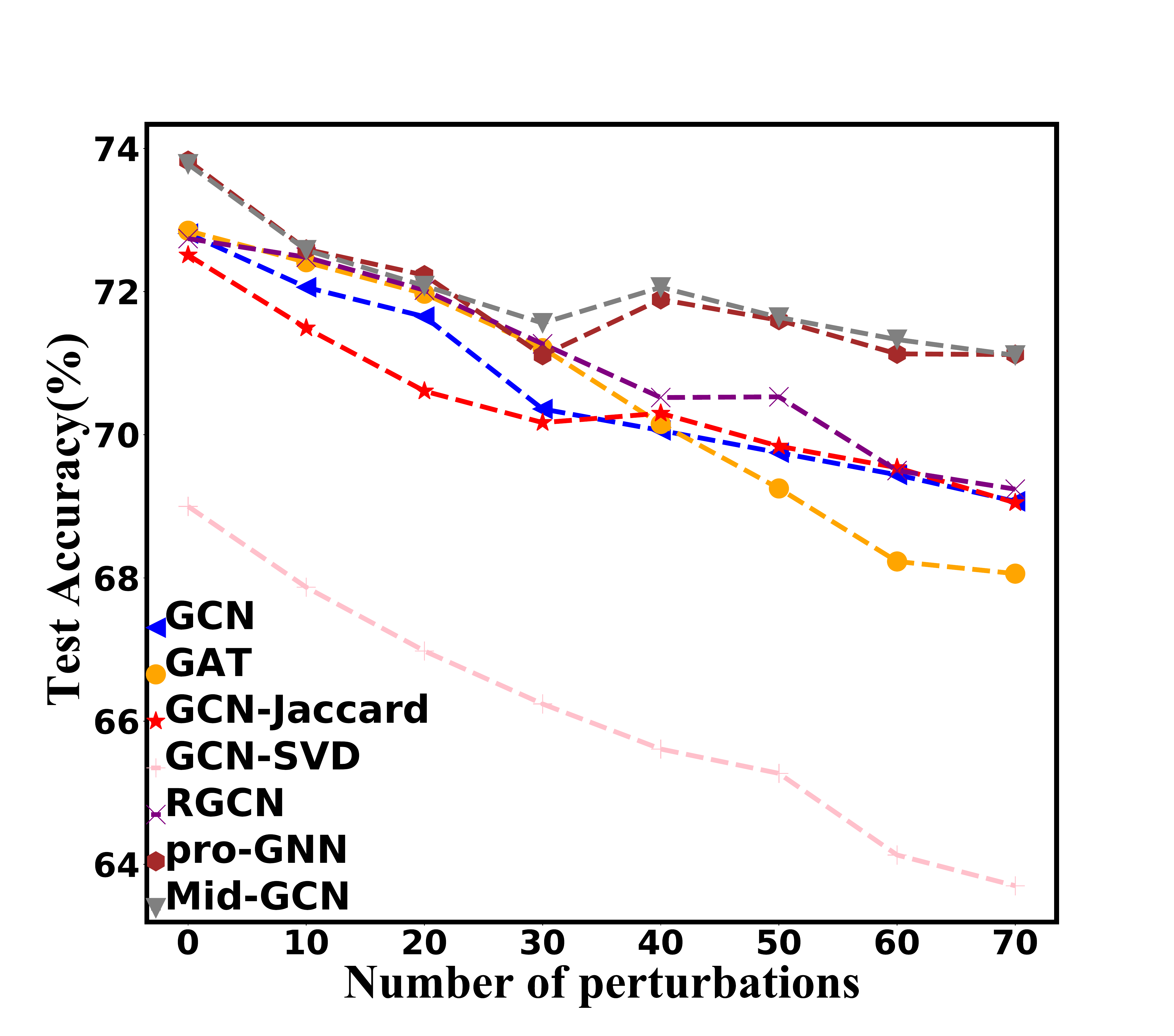}
	\end{minipage}%
}
	\caption{Results of different models under features attack.}
	\label{fig:fea_att}
\end{figure}


\subsection{Evaluation on Various Graphs.}
\label{sec:wide_data}
In this section, we study the performance when the graph is heavily poisoned on more diverse datasets. Specifically, we attack the graph data with 25\% perturbation rate by \emph{metattack}. Note that Polblogs is a dataset without node features, whose input only contains the graph structure. Cora-ML is a homophily graph while Film is a heterophily graph. From Table~\ref{tb:wide_data} we can discover that Mid-GCN can always perform well even with severe poisoning on different categories of graph data, which provides powerful support for the promotion of Mid-GCN.

 \begin{table}[htbp]
 	\centering
 	\caption{Classification results on three different graphs under 25\% perturbation rate \emph{metattack}.
}
 	\label{tb:wide_data}
  \resizebox{1.0\columnwidth}{!}{%
 	\begin{tabular}{c|cc|cc|cc}
 		\toprule
 		\multirow{2}{*}{Models} & \multicolumn{2}{c}{Polblogs}  &   \multicolumn{2}{c}{Cora-ML}  &   \multicolumn{2}{c}{Film}    \\
 		~ & Clean & Attacked & Clean & Attacked & Clean & Attacked \\
 		\midrule
 		
 		GCN & 95.69\scriptsize $\pm$0.38 & 45.23\scriptsize $\pm$1.36 & 85.85\scriptsize $\pm$0.30 & 48.80\scriptsize $\pm$0.91 & 27.82\scriptsize $\pm$1.10 & 25.36\scriptsize $\pm$0.13  \\
 		Pro-GNN & 93.20\scriptsize $\pm$0.64 & 63.18\scriptsize $\pm$4.40 & 85.38\scriptsize $\pm$0.14  & 65.52\scriptsize $\pm$0.23 & 30.08\scriptsize $\pm$0.96 & 27.03\scriptsize $\pm$0.61  \\
 		Mid-GCN & \textbf{95.80\scriptsize $\pm$0.26} & \textbf{64.66\scriptsize $\pm$0.86} & \textbf{86.56\scriptsize $\pm$0.28} & \textbf{67.18\scriptsize $\pm$1.35} & \textbf{34.32\scriptsize $\pm$0.84} & \textbf{33.17\scriptsize $\pm$0.70}  \\
 		\bottomrule
 	\end{tabular}
  }
 \end{table}

\subsection{Time Complexity Study.}
Time complexity is a vital indicator of model performance.
Therefore, we illustrate both the effectiveness and efficiency of GCN, GAT, Pro-GNN and Mid-GCN on Figure~\ref{fig:time} for a comprehensive understanding. 
Pointers near the top left corner indicate better performance both in effectiveness and efficiency.
It can be found that Mid-GCN obtains an excellent balance between accuracy and running time.
\begin{figure}[t]
	\centering
	\includegraphics[width=0.7\columnwidth]{./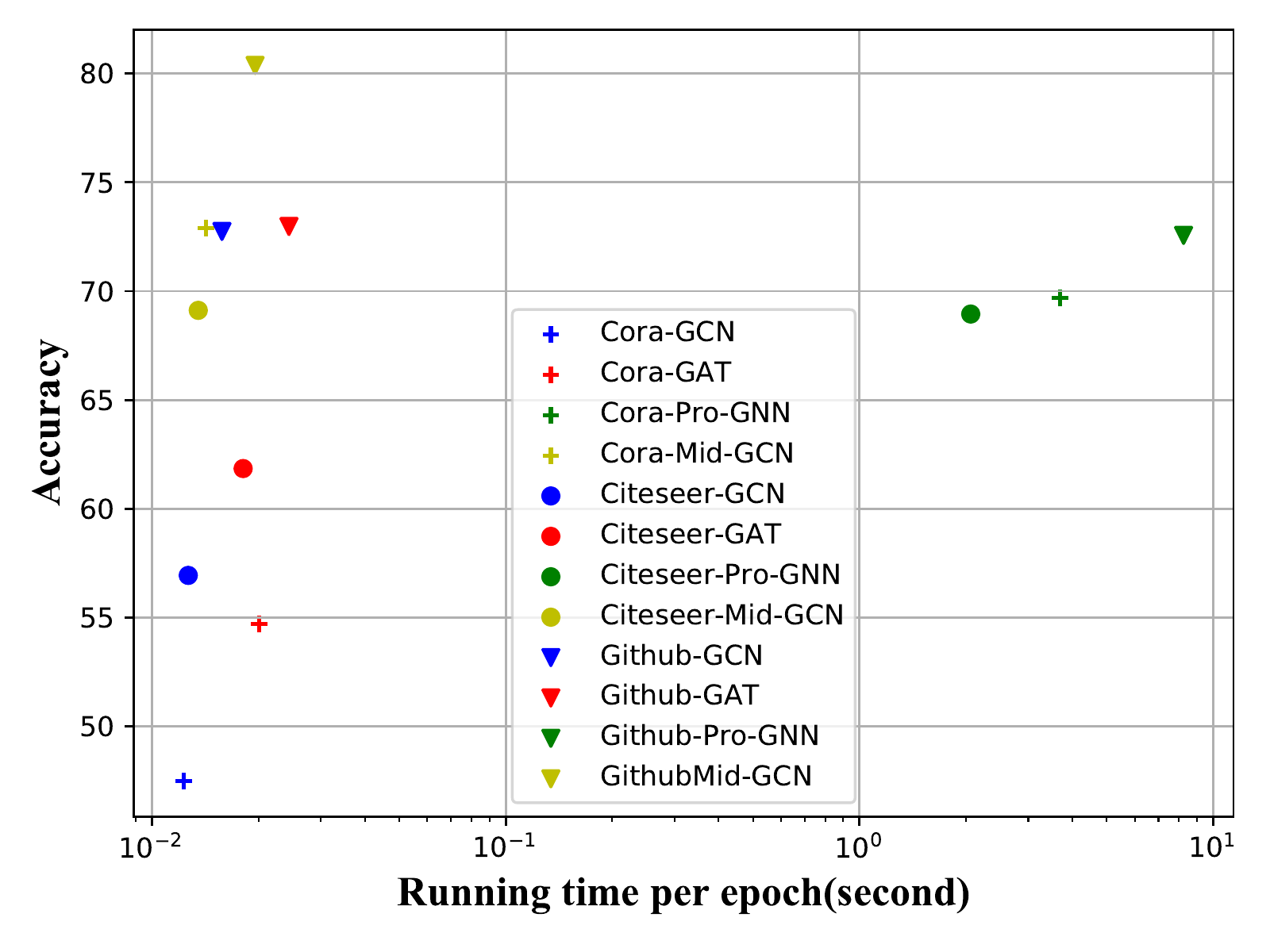} 
	\caption{Accuracy v.s. running time.}
	\label{fig:time}
\end{figure}

\subsection{Hyperparameter Study.}

We also explore the impact of hyperparameters $\alpha$ on the model prediction accuracy. $\alpha$ varies from 0 to 2, and the results on Cora and Citeseer are shown in Fig.~\ref{fig:hyper} under a perturbation
rate of 10\% of metattack. We can see that the accuracy remains stable when $\alpha$ ranges from 0 to 0.7 where the filter can be treated as a mid-high-pass filter. In other words, mid-frequency and high-frequency signals play a critical role in defending against generative adversarial attacks, and it is consistent with our previous observations.
\begin{figure}[t]
	\centering
	\includegraphics[width=0.7\columnwidth]{./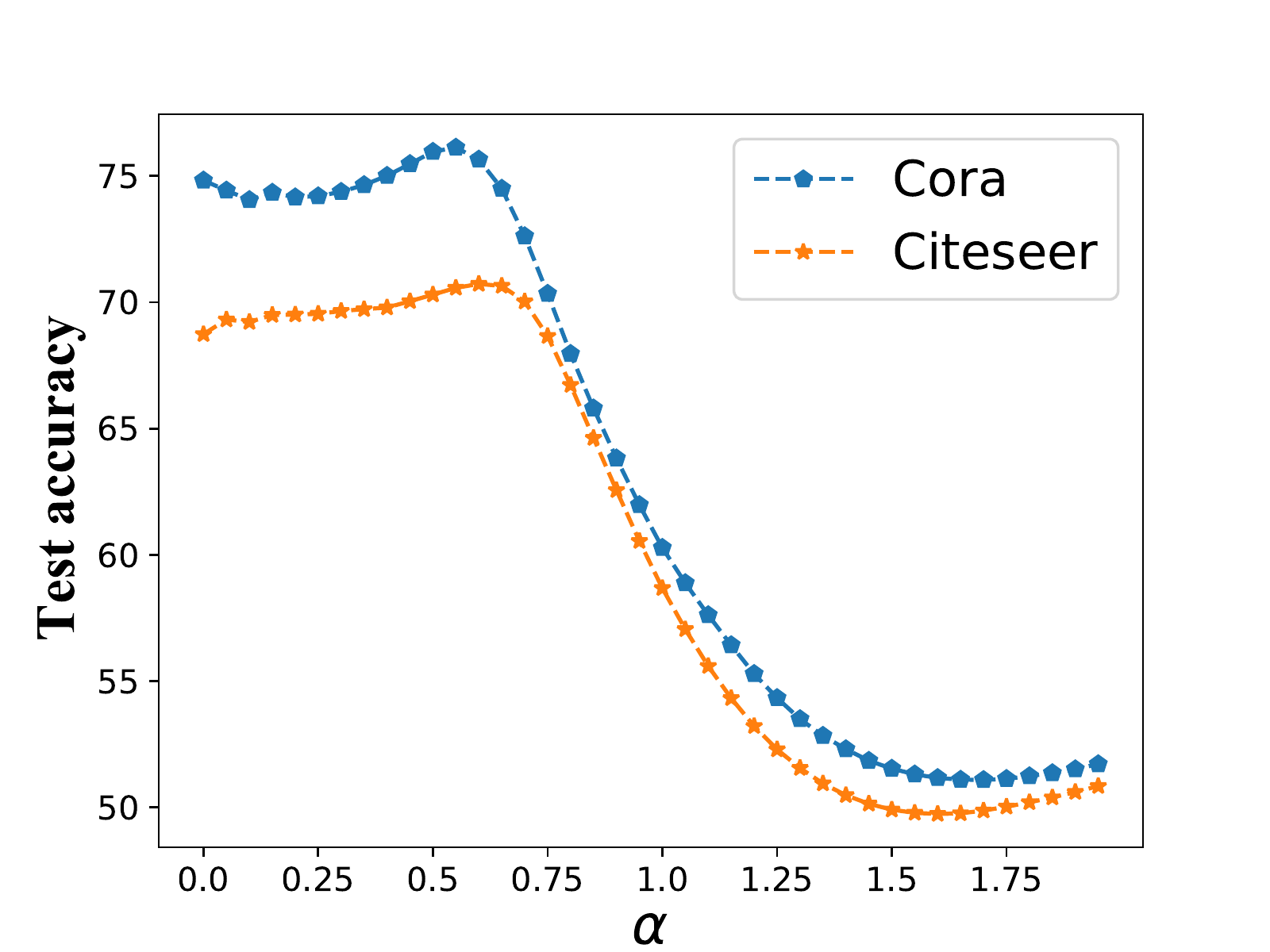}
	\caption{Hyperparameter study on Cora and Citeseer.}
	\label{fig:hyper}
\end{figure}

	\section{Conclusion}
In this paper, we discovered the importance of mid-frequency signals on GCNs that are less studied especially from the perspective of robustness. We propose a mid-pass filtering GCN model, \emph{Mid-GCN}, and conduct an in-depth theoretical analysis to prove its effectivenss over attacks compared with low-pass and high-pass GCN.  
Our experiments further verify that Mid-GCN consistently surpasses state-of-the-art baselines and improves robustness under various adversarial attacks and graphs with distinct properties. The mid-pass filtering GCN is believed to be effective in a broader application prospect and will be the next step of future exploration.


\clearpage
\bibliographystyle{ACM-Reference-Format}
\bibliography{reference}


\begin{thebibliography}{47}


\ifx \showCODEN    \undefined \def \showCODEN     #1{\unskip}     \fi
\ifx \showDOI      \undefined \def \showDOI       #1{#1}\fi
\ifx \showISBNx    \undefined \def \showISBNx     #1{\unskip}     \fi
\ifx \showISBNxiii \undefined \def \showISBNxiii  #1{\unskip}     \fi
\ifx \showISSN     \undefined \def \showISSN      #1{\unskip}     \fi
\ifx \showLCCN     \undefined \def \showLCCN      #1{\unskip}     \fi
\ifx \shownote     \undefined \def \shownote      #1{#1}          \fi
\ifx \showarticletitle \undefined \def \showarticletitle #1{#1}   \fi
\ifx \showURL      \undefined \def \showURL       {\relax}        \fi
\providecommand\bibfield[2]{#2}
\providecommand\bibinfo[2]{#2}
\providecommand\natexlab[1]{#1}
\providecommand\showeprint[2][]{arXiv:#2}

\bibitem[Bo et~al\mbox{.}(2021)]%
        {fagcn}
\bibfield{author}{\bibinfo{person}{Deyu Bo}, \bibinfo{person}{Xiao Wang},
  \bibinfo{person}{Chuan Shi}, {and} \bibinfo{person}{Huawei Shen}.}
  \bibinfo{year}{2021}\natexlab{}.
\newblock \showarticletitle{Beyond Low-frequency Information in Graph
  Convolutional Networks}. In \bibinfo{booktitle}{\emph{Thirty-Fifth {AAAI}
  Conference on Artificial Intelligence, {AAAI}2021}}.
  \bibinfo{publisher}{{AAAI} Press}, \bibinfo{pages}{3950--3957}.
\newblock


\bibitem[Bojchevski and G{\"u}nnemann(2017)]%
        {cora-ml}
\bibfield{author}{\bibinfo{person}{Aleksandar Bojchevski} {and}
  \bibinfo{person}{Stephan G{\"u}nnemann}.} \bibinfo{year}{2017}\natexlab{}.
\newblock \showarticletitle{Deep gaussian embedding of graphs: Unsupervised
  inductive learning via ranking}.
\newblock \bibinfo{journal}{\emph{arXiv preprint arXiv:1707.03815}}
  (\bibinfo{year}{2017}).
\newblock


\bibitem[Bruna et~al\mbox{.}(2013)]%
        {spectral_lecun}
\bibfield{author}{\bibinfo{person}{Joan Bruna}, \bibinfo{person}{Wojciech
  Zaremba}, \bibinfo{person}{Arthur Szlam}, {and} \bibinfo{person}{Yann
  LeCun}.} \bibinfo{year}{2013}\natexlab{}.
\newblock \showarticletitle{Spectral networks and locally connected networks on
  graphs}.
\newblock \bibinfo{journal}{\emph{arXiv preprint arXiv:1312.6203}}
  (\bibinfo{year}{2013}).
\newblock


\bibitem[Chang et~al\mbox{.}(2021)]%
        {LFR}
\bibfield{author}{\bibinfo{person}{Heng Chang}, \bibinfo{person}{Yu Rong},
  \bibinfo{person}{Tingyang Xu}, \bibinfo{person}{Yatao Bian},
  \bibinfo{person}{Shiji Zhou}, \bibinfo{person}{Xin Wang},
  \bibinfo{person}{Junzhou Huang}, {and} \bibinfo{person}{Wenwu Zhu}.}
  \bibinfo{year}{2021}\natexlab{}.
\newblock \showarticletitle{Not all low-pass filters are robust in graph
  convolutional networks}.
\newblock \bibinfo{journal}{\emph{Advances in Neural Information Processing
  Systems}}  \bibinfo{volume}{34} (\bibinfo{year}{2021}),
  \bibinfo{pages}{25058--25071}.
\newblock


\bibitem[Chen et~al\mbox{.}(2022)]%
        {matengfei}
\bibfield{author}{\bibinfo{person}{Zhixian Chen}, \bibinfo{person}{Tengfei Ma},
  {and} \bibinfo{person}{Yang Wang}.} \bibinfo{year}{2022}\natexlab{}.
\newblock \showarticletitle{When Does A Spectral Graph Neural Network Fail in
  Node Classification?}
\newblock \bibinfo{journal}{\emph{arXiv preprint arXiv:2202.07902}}
  (\bibinfo{year}{2022}).
\newblock


\bibitem[Chien et~al\mbox{.}(2021)]%
        {GPRGNN2021}
\bibfield{author}{\bibinfo{person}{Eli Chien}, \bibinfo{person}{Jianhao Peng},
  \bibinfo{person}{Pan Li}, {and} \bibinfo{person}{Olgica Milenkovic}.}
  \bibinfo{year}{2021}\natexlab{}.
\newblock \showarticletitle{Adaptive Universal Generalized PageRank Graph
  Neural Network}. In \bibinfo{booktitle}{\emph{9th International Conference on
  Learning Representations, {ICLR} 2021, Virtual Event, Austria, May 3-7,
  2021}}.
\newblock


\bibitem[Chung and Graham(1997)]%
        {spectral1997}
\bibfield{author}{\bibinfo{person}{Fan~RK Chung} {and}
  \bibinfo{person}{Fan~Chung Graham}.} \bibinfo{year}{1997}\natexlab{}.
\newblock \bibinfo{booktitle}{\emph{Spectral graph theory}}.
  Vol.~\bibinfo{volume}{92}.
\newblock \bibinfo{publisher}{American Mathematical Soc.}
\newblock


\bibitem[Coley et~al\mbox{.}(2019)]%
        {molecular2}
\bibfield{author}{\bibinfo{person}{Connor~W Coley}, \bibinfo{person}{Wengong
  Jin}, \bibinfo{person}{Luke Rogers}, \bibinfo{person}{Timothy~F Jamison},
  \bibinfo{person}{Tommi~S Jaakkola}, \bibinfo{person}{William~H Green},
  \bibinfo{person}{Regina Barzilay}, {and} \bibinfo{person}{Klavs~F Jensen}.}
  \bibinfo{year}{2019}\natexlab{}.
\newblock \showarticletitle{A graph-convolutional neural network model for the
  prediction of chemical reactivity}.
\newblock \bibinfo{journal}{\emph{Chemical science}} \bibinfo{volume}{10},
  \bibinfo{number}{2} (\bibinfo{year}{2019}), \bibinfo{pages}{370--377}.
\newblock


\bibitem[Dai et~al\mbox{.}(2018)]%
        {attack_2018}
\bibfield{author}{\bibinfo{person}{Hanjun Dai}, \bibinfo{person}{Hui Li},
  \bibinfo{person}{Tian Tian}, \bibinfo{person}{Xin Huang},
  \bibinfo{person}{Lin Wang}, \bibinfo{person}{Jun Zhu}, {and}
  \bibinfo{person}{Le Song}.} \bibinfo{year}{2018}\natexlab{}.
\newblock \showarticletitle{Adversarial attack on graph structured data}. In
  \bibinfo{booktitle}{\emph{International conference on machine learning}}.
  PMLR, \bibinfo{pages}{1115--1124}.
\newblock


\bibitem[Defferrard et~al\mbox{.}(2016)]%
        {cheyGCN2016}
\bibfield{author}{\bibinfo{person}{Micha{\"e}l Defferrard},
  \bibinfo{person}{Xavier Bresson}, {and} \bibinfo{person}{Pierre
  Vandergheynst}.} \bibinfo{year}{2016}\natexlab{}.
\newblock \showarticletitle{Convolutional neural networks on graphs with fast
  localized spectral filtering}.
\newblock \bibinfo{journal}{\emph{Advances in neural information processing
  systems}}  \bibinfo{volume}{29} (\bibinfo{year}{2016}),
  \bibinfo{pages}{3844--3852}.
\newblock


\bibitem[Entezari et~al\mbox{.}(2020)]%
        {gcn-svd}
\bibfield{author}{\bibinfo{person}{Negin Entezari}, \bibinfo{person}{Saba~A
  Al-Sayouri}, \bibinfo{person}{Amirali Darvishzadeh}, {and}
  \bibinfo{person}{Evangelos~E Papalexakis}.} \bibinfo{year}{2020}\natexlab{}.
\newblock \showarticletitle{All you need is low (rank) defending against
  adversarial attacks on graphs}. In \bibinfo{booktitle}{\emph{Proceedings of
  the 13th International Conference on Web Search and Data Mining}}.
  \bibinfo{pages}{169--177}.
\newblock


\bibitem[Han et~al\mbox{.}(2019)]%
        {graph_web2}
\bibfield{author}{\bibinfo{person}{Fred~X Han}, \bibinfo{person}{Di Niu},
  \bibinfo{person}{Kunfeng Lai}, \bibinfo{person}{Weidong Guo},
  \bibinfo{person}{Yancheng He}, {and} \bibinfo{person}{Yu Xu}.}
  \bibinfo{year}{2019}\natexlab{}.
\newblock \showarticletitle{Inferring search queries from web documents via a
  graph-augmented sequence to attention network}. In
  \bibinfo{booktitle}{\emph{The World Wide Web Conference}}.
  \bibinfo{pages}{2792--2798}.
\newblock


\bibitem[He et~al\mbox{.}(2021)]%
        {bernnet}
\bibfield{author}{\bibinfo{person}{Mingguo He}, \bibinfo{person}{Zhewei Wei},
  \bibinfo{person}{Hongteng Xu}, {et~al\mbox{.}}}
  \bibinfo{year}{2021}\natexlab{}.
\newblock \showarticletitle{Bernnet: Learning arbitrary graph spectral filters
  via bernstein approximation}.
\newblock \bibinfo{journal}{\emph{Advances in Neural Information Processing
  Systems}}  \bibinfo{volume}{34} (\bibinfo{year}{2021}),
  \bibinfo{pages}{14239--14251}.
\newblock


\bibitem[Huang et~al\mbox{.}(2022)]%
        {huang2022learning}
\bibfield{author}{\bibinfo{person}{Jincheng Huang}, \bibinfo{person}{Ping Li},
  \bibinfo{person}{Rui Huang}, {and} \bibinfo{person}{Chen Na}.}
  \bibinfo{year}{2022}\natexlab{}.
\newblock \showarticletitle{Learning heterophilious edge to drop: A general
  framework for boosting graph neural networks}.
\newblock \bibinfo{journal}{\emph{arXiv preprint arXiv:2205.11322}}
  (\bibinfo{year}{2022}).
\newblock


\bibitem[Jin et~al\mbox{.}(2020)]%
        {prognn}
\bibfield{author}{\bibinfo{person}{Wei Jin}, \bibinfo{person}{Yao Ma},
  \bibinfo{person}{Xiaorui Liu}, \bibinfo{person}{Xianfeng Tang},
  \bibinfo{person}{Suhang Wang}, {and} \bibinfo{person}{Jiliang Tang}.}
  \bibinfo{year}{2020}\natexlab{}.
\newblock \showarticletitle{Graph structure learning for robust graph neural
  networks}. In \bibinfo{booktitle}{\emph{Proceedings of the 26th ACM SIGKDD
  international conference on knowledge discovery \& data mining}}.
  \bibinfo{pages}{66--74}.
\newblock


\bibitem[Kearnes et~al\mbox{.}(2016)]%
        {molecular}
\bibfield{author}{\bibinfo{person}{Steven Kearnes}, \bibinfo{person}{Kevin
  McCloskey}, \bibinfo{person}{Marc Berndl}, \bibinfo{person}{Vijay Pande},
  {and} \bibinfo{person}{Patrick Riley}.} \bibinfo{year}{2016}\natexlab{}.
\newblock \showarticletitle{Molecular graph convolutions: moving beyond
  fingerprints}.
\newblock \bibinfo{journal}{\emph{Journal of computer-aided molecular design}}
  \bibinfo{volume}{30}, \bibinfo{number}{8} (\bibinfo{year}{2016}),
  \bibinfo{pages}{595--608}.
\newblock


\bibitem[Kipf and Welling(2017)]%
        {kipf2017gcn}
\bibfield{author}{\bibinfo{person}{Thomas~N. Kipf} {and} \bibinfo{person}{Max
  Welling}.} \bibinfo{year}{2017}\natexlab{}.
\newblock \showarticletitle{Semi-Supervised Classification with Graph
  Convolutional Networks}. In \bibinfo{booktitle}{\emph{5th International
  Conference on Learning Representations, {ICLR} 2017, Toulon, France, April
  24-26, 2017, Conference Track Proceedings}}.
\newblock


\bibitem[Klicpera et~al\mbox{.}(2018)]%
        {appnp}
\bibfield{author}{\bibinfo{person}{Johannes Klicpera},
  \bibinfo{person}{Aleksandar Bojchevski}, {and} \bibinfo{person}{Stephan
  G{\"u}nnemann}.} \bibinfo{year}{2018}\natexlab{}.
\newblock \showarticletitle{Predict then propagate: Graph neural networks meet
  personalized pagerank}.
\newblock \bibinfo{journal}{\emph{arXiv preprint arXiv:1810.05997}}
  (\bibinfo{year}{2018}).
\newblock


\bibitem[Koren(2003)]%
        {spectral_draw}
\bibfield{author}{\bibinfo{person}{Yehuda Koren}.}
  \bibinfo{year}{2003}\natexlab{}.
\newblock \showarticletitle{On spectral graph drawing}. In
  \bibinfo{booktitle}{\emph{International Computing and Combinatorics
  Conference}}. \bibinfo{pages}{496--508}.
\newblock


\bibitem[Lei et~al\mbox{.}(2022)]%
        {evennet}
\bibfield{author}{\bibinfo{person}{Runlin Lei}, \bibinfo{person}{Zhen Wang},
  \bibinfo{person}{Yaliang Li}, \bibinfo{person}{Bolin Ding}, {and}
  \bibinfo{person}{Zhewei Wei}.} \bibinfo{year}{2022}\natexlab{}.
\newblock \showarticletitle{EvenNet: Ignoring Odd-Hop Neighbors Improves
  Robustness of Graph Neural Networks}.
\newblock \bibinfo{journal}{\emph{arXiv preprint arXiv:2205.13892}}
  (\bibinfo{year}{2022}).
\newblock


\bibitem[Li et~al\mbox{.}(2020)]%
        {deeprobust}
\bibfield{author}{\bibinfo{person}{Yaxin Li}, \bibinfo{person}{Wei Jin},
  \bibinfo{person}{Han Xu}, {and} \bibinfo{person}{Jiliang Tang}.}
  \bibinfo{year}{2020}\natexlab{}.
\newblock \showarticletitle{Deeprobust: A pytorch library for adversarial
  attacks and defenses}.
\newblock \bibinfo{journal}{\emph{arXiv preprint arXiv:2005.06149}}
  (\bibinfo{year}{2020}).
\newblock


\bibitem[Liu et~al\mbox{.}(2019)]%
        {attack_1}
\bibfield{author}{\bibinfo{person}{Xuanqing Liu}, \bibinfo{person}{Si Si},
  \bibinfo{person}{Xiaojin Zhu}, \bibinfo{person}{Yang Li}, {and}
  \bibinfo{person}{Cho-Jui Hsieh}.} \bibinfo{year}{2019}\natexlab{}.
\newblock \showarticletitle{A unified framework for data poisoning attack to
  graph-based semi-supervised learning}.
\newblock \bibinfo{journal}{\emph{arXiv preprint arXiv:1910.14147}}
  (\bibinfo{year}{2019}).
\newblock


\bibitem[Ma et~al\mbox{.}(2021)]%
        {ma2021unified}
\bibfield{author}{\bibinfo{person}{Yao Ma}, \bibinfo{person}{Xiaorui Liu},
  \bibinfo{person}{Tong Zhao}, \bibinfo{person}{Yozen Liu},
  \bibinfo{person}{Jiliang Tang}, {and} \bibinfo{person}{Neil Shah}.}
  \bibinfo{year}{2021}\natexlab{}.
\newblock \showarticletitle{A unified view on graph neural networks as graph
  signal denoising}. In \bibinfo{booktitle}{\emph{Proceedings of the 30th ACM
  International Conference on Information \& Knowledge Management}}.
  \bibinfo{pages}{1202--1211}.
\newblock


\bibitem[Ma et~al\mbox{.}(2019)]%
        {attack_2}
\bibfield{author}{\bibinfo{person}{Yao Ma}, \bibinfo{person}{Suhang Wang},
  \bibinfo{person}{Tyler Derr}, \bibinfo{person}{Lingfei Wu}, {and}
  \bibinfo{person}{Jiliang Tang}.} \bibinfo{year}{2019}\natexlab{}.
\newblock \showarticletitle{Attacking graph convolutional networks via
  rewiring}.
\newblock \bibinfo{journal}{\emph{arXiv preprint arXiv:1906.03750}}
  (\bibinfo{year}{2019}).
\newblock


\bibitem[Min et~al\mbox{.}(2020)]%
        {scatterGCN}
\bibfield{author}{\bibinfo{person}{Yimeng Min}, \bibinfo{person}{Frederik
  Wenkel}, {and} \bibinfo{person}{Guy Wolf}.} \bibinfo{year}{2020}\natexlab{}.
\newblock \showarticletitle{Scattering gcn: Overcoming oversmoothness in graph
  convolutional networks}.
\newblock \bibinfo{journal}{\emph{Advances in Neural Information Processing
  Systems}}  \bibinfo{volume}{33} (\bibinfo{year}{2020}),
  \bibinfo{pages}{14498--14508}.
\newblock


\bibitem[Na et~al\mbox{.}(2022)]%
        {nagraph}
\bibfield{author}{\bibinfo{person}{CHEN Na}, \bibinfo{person}{HUANG Jincheng},
  {and} \bibinfo{person}{LI Ping}.} \bibinfo{year}{2022}\natexlab{}.
\newblock \showarticletitle{Graph Neural Network Defense Combined with
  Contrastive Learning}.
\newblock \bibinfo{journal}{\emph{Journal of Frontiers of Computer Science \&
  Technology}} (\bibinfo{year}{2022}).
\newblock


\bibitem[Nt and Maehara(2019)]%
        {gcn_lowpass}
\bibfield{author}{\bibinfo{person}{Hoang Nt} {and} \bibinfo{person}{Takanori
  Maehara}.} \bibinfo{year}{2019}\natexlab{}.
\newblock \showarticletitle{Revisiting graph neural networks: All we have is
  low-pass filters}.
\newblock \bibinfo{journal}{\emph{arXiv preprint arXiv:1905.09550}}
  (\bibinfo{year}{2019}).
\newblock


\bibitem[Pei et~al\mbox{.}(2020)]%
        {geomGCN}
\bibfield{author}{\bibinfo{person}{Hongbin Pei}, \bibinfo{person}{Bingzhe Wei},
  \bibinfo{person}{Kevin~Chen{-}Chuan Chang}, \bibinfo{person}{Yu Lei}, {and}
  \bibinfo{person}{Bo Yang}.} \bibinfo{year}{2020}\natexlab{}.
\newblock \showarticletitle{Geom-GCN: Geometric Graph Convolutional Networks}.
  In \bibinfo{booktitle}{\emph{8th International Conference on Learning
  Representations, {ICLR} 2020, Addis Ababa, Ethiopia, April 26-30, 2020}}.
  \bibinfo{publisher}{OpenReview.net}.
\newblock
\urldef\tempurl%
\url{https://openreview.net/forum?id=S1e2agrFvS}
\showURL{%
\tempurl}


\bibitem[Rozemberczki et~al\mbox{.}(2021)]%
        {Github}
\bibfield{author}{\bibinfo{person}{Benedek Rozemberczki}, \bibinfo{person}{Carl
  Allen}, {and} \bibinfo{person}{Rik Sarkar}.} \bibinfo{year}{2021}\natexlab{}.
\newblock \showarticletitle{Multi-scale attributed node embedding}.
\newblock \bibinfo{journal}{\emph{Journal of Complex Networks}}
  \bibinfo{volume}{9}, \bibinfo{number}{2} (\bibinfo{year}{2021}),
  \bibinfo{pages}{cnab014}.
\newblock


\bibitem[Shuman et~al\mbox{.}(2013)]%
        {graph_fourier}
\bibfield{author}{\bibinfo{person}{David~I Shuman}, \bibinfo{person}{Sunil~K.
  Narang}, \bibinfo{person}{Pascal Frossard}, \bibinfo{person}{Antonio Ortega},
  {and} \bibinfo{person}{Pierre Vandergheynst}.}
  \bibinfo{year}{2013}\natexlab{}.
\newblock \showarticletitle{The emerging field of signal processing on graphs:
  Extending high-dimensional data analysis to networks and other irregular
  domains}.
\newblock \bibinfo{journal}{\emph{IEEE Signal Processing Magazine}}
  \bibinfo{volume}{30}, \bibinfo{number}{3} (\bibinfo{year}{2013}),
  \bibinfo{pages}{83--98}.
\newblock
\urldef\tempurl%
\url{https://doi.org/10.1109/MSP.2012.2235192}
\showDOI{\tempurl}


\bibitem[Tang et~al\mbox{.}(2009)]%
        {filmdata2009}
\bibfield{author}{\bibinfo{person}{Jie Tang}, \bibinfo{person}{Jimeng Sun},
  \bibinfo{person}{Chi Wang}, {and} \bibinfo{person}{Zi Yang}.}
  \bibinfo{year}{2009}\natexlab{}.
\newblock \showarticletitle{Social influence analysis in large-scale networks}.
  In \bibinfo{booktitle}{\emph{Proceedings of the 15th ACM SIGKDD international
  conference on Knowledge discovery and data mining}}.
  \bibinfo{pages}{807--816}.
\newblock


\bibitem[Velickovic et~al\mbox{.}(2018)]%
        {2018gat}
\bibfield{author}{\bibinfo{person}{Petar Velickovic}, \bibinfo{person}{Guillem
  Cucurull}, \bibinfo{person}{Arantxa Casanova}, \bibinfo{person}{Adriana
  Romero}, \bibinfo{person}{Pietro Li{\`{o}}}, {and} \bibinfo{person}{Yoshua
  Bengio}.} \bibinfo{year}{2018}\natexlab{}.
\newblock \showarticletitle{Graph Attention Networks}. In
  \bibinfo{booktitle}{\emph{6th International Conference on Learning
  Representations, {ICLR} 2018, Vancouver, BC, Canada, April 30 - May 3, 2018,
  Conference Track Proceedings}}. \bibinfo{publisher}{OpenReview.net}.
\newblock


\bibitem[Wang and Zhang(2022)]%
        {jacobi-net}
\bibfield{author}{\bibinfo{person}{Xiyuan Wang} {and} \bibinfo{person}{Muhan
  Zhang}.} \bibinfo{year}{2022}\natexlab{}.
\newblock \showarticletitle{How Powerful are Spectral Graph Neural Networks}.
\newblock \bibinfo{journal}{\emph{arXiv preprint arXiv:2205.11172}}
  (\bibinfo{year}{2022}).
\newblock


\bibitem[Wu et~al\mbox{.}(2019a)]%
        {2019sgc}
\bibfield{author}{\bibinfo{person}{Felix Wu}, \bibinfo{person}{Amauri Souza},
  \bibinfo{person}{Tianyi Zhang}, \bibinfo{person}{Christopher Fifty},
  \bibinfo{person}{Tao Yu}, {and} \bibinfo{person}{Kilian Weinberger}.}
  \bibinfo{year}{2019}\natexlab{a}.
\newblock \showarticletitle{Simplifying graph convolutional networks}. In
  \bibinfo{booktitle}{\emph{International conference on machine learning}}.
  PMLR, \bibinfo{pages}{6861--6871}.
\newblock


\bibitem[Wu et~al\mbox{.}(2019b)]%
        {gcn-jaccard}
\bibfield{author}{\bibinfo{person}{Huijun Wu}, \bibinfo{person}{Chen Wang},
  \bibinfo{person}{Yuriy Tyshetskiy}, \bibinfo{person}{Andrew Docherty},
  \bibinfo{person}{Kai Lu}, {and} \bibinfo{person}{Liming Zhu}.}
  \bibinfo{year}{2019}\natexlab{b}.
\newblock \showarticletitle{Adversarial examples on graph data: Deep insights
  into attack and defense}.
\newblock \bibinfo{journal}{\emph{arXiv preprint arXiv:1903.01610}}
  (\bibinfo{year}{2019}).
\newblock


\bibitem[Zhang and Lu(2020)]%
        {la-gcn}
\bibfield{author}{\bibinfo{person}{Li Zhang} {and} \bibinfo{person}{Haiping
  Lu}.} \bibinfo{year}{2020}\natexlab{}.
\newblock \showarticletitle{A feature-importance-aware and robust aggregator
  for GCN}. In \bibinfo{booktitle}{\emph{Proceedings of the 29th ACM
  International Conference on Information \& Knowledge Management}}.
  \bibinfo{pages}{1813--1822}.
\newblock


\bibitem[Zhang et~al\mbox{.}(2022)]%
        {graph_web}
\bibfield{author}{\bibinfo{person}{Wentao Zhang}, \bibinfo{person}{Yu Shen},
  \bibinfo{person}{Zheyu Lin}, \bibinfo{person}{Yang Li},
  \bibinfo{person}{Xiaosen Li}, \bibinfo{person}{Wen Ouyang},
  \bibinfo{person}{Yangyu Tao}, \bibinfo{person}{Zhi Yang}, {and}
  \bibinfo{person}{Bin Cui}.} \bibinfo{year}{2022}\natexlab{}.
\newblock \showarticletitle{Pasca: A graph neural architecture search system
  under the scalable paradigm}. In \bibinfo{booktitle}{\emph{Proceedings of the
  ACM Web Conference 2022}}. \bibinfo{pages}{1817--1828}.
\newblock


\bibitem[Zhang and Zitnik(2020)]%
        {zhang2020gnnguard}
\bibfield{author}{\bibinfo{person}{Xiang Zhang} {and} \bibinfo{person}{Marinka
  Zitnik}.} \bibinfo{year}{2020}\natexlab{}.
\newblock \showarticletitle{Gnnguard: Defending graph neural networks against
  adversarial attacks}.
\newblock \bibinfo{journal}{\emph{Advances in neural information processing
  systems}}  \bibinfo{volume}{33} (\bibinfo{year}{2020}),
  \bibinfo{pages}{9263--9275}.
\newblock


\bibitem[Zhao et~al\mbox{.}(2021)]%
        {heat-gcn}
\bibfield{author}{\bibinfo{person}{Jialin Zhao}, \bibinfo{person}{Yuxiao Dong},
  \bibinfo{person}{Ming Ding}, \bibinfo{person}{Evgeny Kharlamov}, {and}
  \bibinfo{person}{Jie Tang}.} \bibinfo{year}{2021}\natexlab{}.
\newblock \showarticletitle{Adaptive Diffusion in Graph Neural Networks}.
\newblock \bibinfo{journal}{\emph{Advances in Neural Information Processing
  Systems}}  \bibinfo{volume}{34} (\bibinfo{year}{2021}),
  \bibinfo{pages}{23321--23333}.
\newblock


\bibitem[Zhao et~al\mbox{.}(2019)]%
        {traffic}
\bibfield{author}{\bibinfo{person}{Ling Zhao}, \bibinfo{person}{Yujiao Song},
  \bibinfo{person}{Chao Zhang}, \bibinfo{person}{Yu Liu}, \bibinfo{person}{Pu
  Wang}, \bibinfo{person}{Tao Lin}, \bibinfo{person}{Min Deng}, {and}
  \bibinfo{person}{Haifeng Li}.} \bibinfo{year}{2019}\natexlab{}.
\newblock \showarticletitle{T-gcn: A temporal graph convolutional network for
  traffic prediction}.
\newblock \bibinfo{journal}{\emph{IEEE Transactions on Intelligent
  Transportation Systems}} \bibinfo{volume}{21}, \bibinfo{number}{9}
  (\bibinfo{year}{2019}), \bibinfo{pages}{3848--3858}.
\newblock


\bibitem[Zheng et~al\mbox{.}(2020)]%
        {traffic2}
\bibfield{author}{\bibinfo{person}{Chuanpan Zheng}, \bibinfo{person}{Xiaoliang
  Fan}, \bibinfo{person}{Cheng Wang}, {and} \bibinfo{person}{Jianzhong Qi}.}
  \bibinfo{year}{2020}\natexlab{}.
\newblock \showarticletitle{Gman: A graph multi-attention network for traffic
  prediction}. In \bibinfo{booktitle}{\emph{Proceedings of the AAAI conference
  on artificial intelligence}}, Vol.~\bibinfo{volume}{34}.
  \bibinfo{pages}{1234--1241}.
\newblock


\bibitem[Zheng et~al\mbox{.}(2021)]%
        {zheng2021graph}
\bibfield{author}{\bibinfo{person}{Qinkai Zheng}, \bibinfo{person}{Xu Zou},
  \bibinfo{person}{Yuxiao Dong}, \bibinfo{person}{Yukuo Cen},
  \bibinfo{person}{Da Yin}, \bibinfo{person}{Jiarong Xu}, \bibinfo{person}{Yang
  Yang}, {and} \bibinfo{person}{Jie Tang}.} \bibinfo{year}{2021}\natexlab{}.
\newblock \showarticletitle{Graph robustness benchmark: Benchmarking the
  adversarial robustness of graph machine learning}.
\newblock \bibinfo{journal}{\emph{arXiv preprint arXiv:2111.04314}}
  (\bibinfo{year}{2021}).
\newblock


\bibitem[Zhu et~al\mbox{.}(2019)]%
        {rgcn}
\bibfield{author}{\bibinfo{person}{Dingyuan Zhu}, \bibinfo{person}{Ziwei
  Zhang}, \bibinfo{person}{Peng Cui}, {and} \bibinfo{person}{Wenwu Zhu}.}
  \bibinfo{year}{2019}\natexlab{}.
\newblock \showarticletitle{Robust graph convolutional networks against
  adversarial attacks}. In \bibinfo{booktitle}{\emph{Proceedings of the 25th
  ACM SIGKDD international conference on knowledge discovery \& data mining}}.
  \bibinfo{pages}{1399--1407}.
\newblock


\bibitem[Zhu et~al\mbox{.}(2022)]%
        {zhu2022does}
\bibfield{author}{\bibinfo{person}{Jiong Zhu}, \bibinfo{person}{Junchen Jin},
  \bibinfo{person}{Donald Loveland}, \bibinfo{person}{Michael~T Schaub}, {and}
  \bibinfo{person}{Danai Koutra}.} \bibinfo{year}{2022}\natexlab{}.
\newblock \showarticletitle{How does Heterophily Impact the Robustness of Graph
  Neural Networks? Theoretical Connections and Practical Implications}. In
  \bibinfo{booktitle}{\emph{Proceedings of the 28th ACM SIGKDD Conference on
  Knowledge Discovery and Data Mining}}. \bibinfo{pages}{2637--2647}.
\newblock


\bibitem[Z{\"u}gner et~al\mbox{.}(2018)]%
        {nettack}
\bibfield{author}{\bibinfo{person}{Daniel Z{\"u}gner}, \bibinfo{person}{Amir
  Akbarnejad}, {and} \bibinfo{person}{Stephan G{\"u}nnemann}.}
  \bibinfo{year}{2018}\natexlab{}.
\newblock \showarticletitle{Adversarial attacks on neural networks for graph
  data}. In \bibinfo{booktitle}{\emph{Proceedings of the 24th ACM SIGKDD
  international conference on knowledge discovery \& data mining}}.
  \bibinfo{pages}{2847--2856}.
\newblock


\bibitem[Z{\"{u}}gner and G{\"{u}}nnemann(2019)]%
        {metattack}
\bibfield{author}{\bibinfo{person}{Daniel Z{\"{u}}gner} {and}
  \bibinfo{person}{Stephan G{\"{u}}nnemann}.} \bibinfo{year}{2019}\natexlab{}.
\newblock \showarticletitle{Adversarial Attacks on Graph Neural Networks via
  Meta Learning}. In \bibinfo{booktitle}{\emph{7th International Conference on
  Learning Representations, {ICLR} 2019, New Orleans, LA, USA, May 6-9, 2019}}.
\newblock


\bibitem[Z{\"u}gner and G{\"u}nnemann(2019)]%
        {feature-attack2}
\bibfield{author}{\bibinfo{person}{Daniel Z{\"u}gner} {and}
  \bibinfo{person}{Stephan G{\"u}nnemann}.} \bibinfo{year}{2019}\natexlab{}.
\newblock \showarticletitle{Certifiable robustness and robust training for
  graph convolutional networks}. In \bibinfo{booktitle}{\emph{Proceedings of
  the 25th ACM SIGKDD International Conference on Knowledge Discovery \& Data
  Mining}}. \bibinfo{pages}{246--256}.
\newblock


\end{thebibliography}
\clearpage
\appendix
\section{Proof}

\subsection{Proof for Lemma~\ref{lem:dt}}
\begin{proof}
	Suppose we have a node $u$, $v \in \mathcal{N}_{u}$ are the first-order neighbor of $u$. Therefore, we can know that $t \in \mathcal{N}_{u}$.

	The more common neighbors between $u$ and $v$, the larger the expectation value of $\sum_{t\in \mathcal{N}_{u} \text{and} t\in \mathcal{N}_{v}}\frac{1}{d_{t}}$, and the largest case is that the remaining nodes are the common neighbors of u and its first-order neighbors.
	Then the graph is a fully connected graph, if there are $N$ nodes, and its expectation is
	$$\mathbb{E}(\sum_{t\in \mathcal{N}_{u} \text{and} t\in \mathcal{N}_{v}}\frac{1}{d_{t}}) = \frac{\sum^{N-2}\sum^{N-2}\frac{1}{N-1}}{N-2} = \frac{N-2}{N-1} < 1,$$
	Therefore, for any node $u$ and its first-order neighbors in the graph, $\mathbb{E}(\sum_{t\in \mathcal{N}_{u} \text{and} t\in \mathcal{N}_{v}}\frac{1}{d_{t}})<1$, and the proof is completed. 
\end{proof}

\subsection{Proof for Theorem~\ref{th:2}}
\label{proof:th2}
\begin{proof}
	Firstly, let's introduce two commonly used lemmas~\cite{spectral_draw}
	\begin{myLem}\label{lem:spectral_draw}
		\cite{spectral_draw}. Let $U_{y}$ be a generalized eigenvector of $\widehat{A}$, with associated eigenvalue $\lambda$. Then, for each $i$, the exact deviation from the centroid of neighbors is,
	\end{myLem}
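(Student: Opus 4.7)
The plan is to obtain the stated identity by a direct coordinatewise rewrite of the eigenvector equation $\widehat{A}\, U_{y} = \lambda\, U_{y}$. Expanding the $i$-th row gives $\lambda\, U_{y,i} = \sum_{j} \widehat{A}_{ij}\, U_{y,j}$, and since $\widehat{A}_{ij} = A_{ij}/\sqrt{d_{i}d_{j}}$ is supported exactly on the edges of $\mathcal{G}$, the right-hand side becomes a $1/\sqrt{d_{i}d_{j}}$-weighted sum of eigenvector values over the neighbor set $\mathcal{N}_{i}$. This is already a ``neighborhood aggregation'' identity, but the weighting prevents it from being read directly as a geometric centroid.

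To convert this weighted sum into an honest arithmetic mean and thus expose a centroid, I would introduce the generalized-eigenvector rescaling $\phi_{i} := U_{y,i}/\sqrt{d_{i}}$ (this is why the lemma phrases $U_{y}$ as a \emph{generalized} eigenvector of $\widehat{A}$). Substituting $U_{y,j} = \sqrt{d_{j}}\,\phi_{j}$ into the identity above and multiplying through by $1/\sqrt{d_{i}}$ collapses the $\sqrt{d_{j}}$ factors, leaving $\sum_{j\in \mathcal{N}_{i}} \phi_{j} = \lambda\, d_{i}\, \phi_{i}$. Dividing by $d_{i}$ now expresses the arithmetic mean of $\phi$ over the neighbors of $i$ as exactly $\lambda\, \phi_{i}$, so the deviation of $\phi_{i}$ from its neighborhood centroid is $\phi_{i} - \lambda\, \phi_{i} = (1-\lambda)\, \phi_{i}$, which is the closed form the lemma is asserting (up to restating it back in terms of $U_{y}$ by reintroducing the $\sqrt{d_{i}}$ factor).

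The only real subtlety is bookkeeping of the two conventions in play: the lemma is stated about an eigenvector of $\widehat{A}$, but the centroid interpretation is natural for the $D^{-1/2}$-rescaled vector $\phi$. I would therefore begin by fixing the relationship $\phi = D^{-1/2}U_{y}$ explicitly, so that the term ``centroid of neighbors'' is unambiguous, and then translate the final identity back to the variable used in the lemma's right-hand side. No estimation, spectral inequality, or probabilistic ingredient is needed; the whole argument is essentially a one-line algebraic rearrangement of the eigenvalue equation, and I do not anticipate any genuine technical obstacle beyond this notational alignment.
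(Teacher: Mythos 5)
First, a caveat: the paper does not prove this lemma at all --- it is imported verbatim from the cited reference --- so there is no in-paper proof to compare against. Judged on its own terms, your algebra is internally correct, but it establishes a \emph{different} identity from the one the lemma asserts, and the discrepancy is exactly a factor of $d_i$. Starting from the ordinary eigenvector equation $\widehat{A}U_y=\lambda U_y$ and setting $\phi=D^{-1/2}U_y$, you correctly get $\sum_{j\in\mathcal{N}(i)}\phi_j=\lambda d_i\phi_i$ and hence $\phi_i-\tfrac{1}{d_i}\sum_{j\in\mathcal{N}(i)}\phi_j=(1-\lambda)\phi_i$. But when you ``reintroduce the $\sqrt{d_i}$ factor,'' the coefficient attached to $U_{y,j}$ becomes $\tfrac{1}{d_i}\cdot\tfrac{\sqrt{d_i}}{\sqrt{d_j}}=\tfrac{1}{\sqrt{d_i d_j}}=\widehat{A}_{ij}$ with no residual $1/d_i$, so the translated identity is $U_{y,i}-\sum_{j\in\mathcal{N}(i)}\widehat{A}_{ij}U_{y,j}=(1-\lambda)U_{y,i}$ --- merely the eigenvector equation rearranged. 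The lemma instead claims $U_{y,i}-\tfrac{1}{d_i}\sum_{j\in\mathcal{N}(i)}\widehat{A}_{ij}U_{y,j}=(1-\lambda)U_{y,i}$, which is equivalent to $\sum_{j}\widehat{A}_{ij}U_{y,j}=\lambda d_i U_{y,i}$. For an ordinary eigenvector of $\widehat{A}$ that left-hand side equals $\lambda U_{y,i}$, so the claimed identity would force $\lambda(1-1/d_i)U_{y,i}=0$; it is false in general under your hypothesis.

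The missing idea is the correct reading of ``generalized eigenvector'': it means a solution of the generalized eigenproblem $\widehat{A}U_y=\lambda D U_y$ for the pencil $(\widehat{A},D)$ (mirroring the cited source, where the lemma is stated for generalized eigenvectors of $(A,D)$), not your $D^{-1/2}$ rescaling of an ordinary eigenvector. Under that hypothesis the proof really is one line: expand row $i$ of $\widehat{A}U_y=\lambda D U_y$ to get $\sum_{j\in\mathcal{N}(i)}\widehat{A}_{ij}U_{y,j}=\lambda d_i U_{y,i}$, divide by $d_i$, and subtract from $U_{y,i}$. The $d_i$ factor you lose is not cosmetic: the paper's proof of Theorem~2 immediately rewrites the lemma as $\lambda U_{y,i}^2 d_i=\sum_{j\in\mathcal{N}(i)}\widehat{A}_{ij}U_{y,j}U_{y,i}$, which only holds in the generalized-eigenproblem form. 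So you should restate your hypothesis as $\widehat{A}U_y=\lambda DU_y$ and drop the $\phi$ detour; as written, your argument proves a true statement that is not the lemma.
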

	\begin{equation}
		U_{y,i} - \frac{\sum_{j\in \mathcal{N}(i)}\widehat{A}_{i,j}U_{y,j}}{d_{i}} = (1-\lambda)U_{y, i},
	\end{equation}
	
	\begin{myLem}
		\label{lem:sig}
		For the graph signal, consider connected nodes whose signal response strengths are $U_{y,u}$ and $U_{y,v}$, respectively. $\mathbb{E}(U_{l,u}U_{l,v})>0$ holds for low-frequency signals, and $\mathbb{E}(U_{h,u}U_{h,v})<0$ holds for high-frequency ones. For the mid-frequency signals, we have $\left | \mathbb{E}(U_{m,u}U_{m,v})\right | < \left | \mathbb{E}(U_{l,u}U_{l,v})\right |$ and $\left | \mathbb{E}(U_{m,u}U_{m,v})\right | < \left | \mathbb{E}(U_{h,u}U_{h,v})\right |$.
	\end{myLem}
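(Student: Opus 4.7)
The plan is to exploit the eigen-equation $\widehat{A} U_{y} = \lambda U_{y}$ directly, turning the local relation in Lemma~\ref{lem:spectral_draw} into a global identity whose sign and magnitude are both driven by $\lambda$. Concretely, I will form the Rayleigh quotient associated with $\widehat{A}$ and expand it edge-by-edge.

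First, taking $U_{y}$ to be unit-norm, the Rayleigh quotient gives $U_{y}^{\top} \widehat{A} U_{y} = \lambda$. Using $\widehat{A}_{uv} = A_{uv}/\sqrt{d_{u} d_{v}}$ and the fact that every undirected edge contributes twice in the quadratic form,
\begin{equation*}
2 \sum_{(u,v)\in E} \frac{U_{y,u} U_{y,v}}{\sqrt{d_{u} d_{v}}} \;=\; \lambda.
\end{equation*}
Reading the left-hand side as an average over a uniformly sampled edge yields $\mathbb{E}\bigl(U_{y,u} U_{y,v}/\sqrt{d_{u} d_{v}}\bigr) = \lambda/(2|E|)$. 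Since $1/\sqrt{d_{u} d_{v}} > 0$ edge-wise, the sign of $\mathbb{E}(U_{y,u} U_{y,v})$ for a randomly chosen edge $(u,v)$ is governed by the sign of $\lambda$: this yields $\mathbb{E}(U_{l,u} U_{l,v})>0$ for $\lambda_{l}\in[p,1]$ and $\mathbb{E}(U_{h,u} U_{h,v})<0$ for $\lambda_{h}\in[-1,-p]$, establishing the two sign claims.

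For the magnitude comparison, the frequency-band partition fixed in Theorem~\ref{th:2} gives $|\lambda_{m}| < p \leq \min\{|\lambda_{l}|, |\lambda_{h}|\}$. Applying the identity above separately in each band shows that $\left|\mathbb{E}(U_{y,u} U_{y,v})\right|$ is monotone in $|\lambda|$ up to the (positive, band-independent) degree-weight factor, so the mid-frequency expectation has the strictly smaller magnitude claimed in the lemma.

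The main obstacle I foresee is the gap between the degree-weighted expectation $\mathbb{E}\bigl(U_{y,u} U_{y,v}/\sqrt{d_{u} d_{v}}\bigr)$ delivered by the Rayleigh quotient and the unweighted $\mathbb{E}(U_{y,u} U_{y,v})$ that the lemma actually compares. I would close this in one of two ways: (i) assume edge-endpoint degrees are roughly homogeneous so the $1/\sqrt{d_{u} d_{v}}$ factor can be absorbed into a band-independent positive constant, or (ii) reinterpret the expectation as one taken under the natural edge measure weighted by $1/\sqrt{d_{u}d_{v}}$, which is standard in spectral graph analysis. Either route preserves the monotonicity in $|\lambda|$ that drives the conclusion; a secondary subtlety is ensuring that Lemma~\ref{lem:spectral_draw}, stated componentwise, is summed into a global form without introducing stray cross-terms at boundary-degree nodes.
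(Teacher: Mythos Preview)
Your approach is essentially the paper's: both invoke the Rayleigh quotient $U_{y}^{\top}\widehat{A}U_{y}=\lambda_{y}$, expand over edges, and read off that $\mathbb{E}(U_{y,u}U_{y,v})$ is a positive multiple of $\lambda_{y}$, from which the sign and magnitude claims follow immediately via the band partition $|\lambda_{m}|<p\le|\lambda_{l}|,|\lambda_{h}|$. The paper silently adopts your option~(ii)---it writes $\sum_{i,j}\widehat{A}_{i,j}U_{y,i}U_{y,j}=\mathbb{E}(U_{y,u}U_{y,v})\sum_{i,j}\widehat{A}_{i,j}$, i.e.\ treats $\mathbb{E}$ as the $\widehat{A}$-weighted edge average from the outset---so the gap you flag never appears, and Lemma~\ref{lem:spectral_draw} plays no role in this lemma's proof.
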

	\noindent which is proved in Appendix~\ref{ap:sig}. \\
	\indent
	The one-edge insertion case is similar to one-edge deletion with an opposite value of $\Delta \lambda_{y}$. We mainly discuss  $|\Delta\lambda|$.Therefore, we can write Lemma~\ref{lem:per} as
	\begin{equation}
		|\Delta \lambda| = |\lambda(U_{y, u}^{2} + U_{y,v}^{2}) - 2U_{y, u}U_{y,v}|
	\end{equation}
	Then, We write $\Delta \lambda$ of high, middle and low frequency, respectively.
	\begin{equation}\label{eq:delta_lambda}
		\begin{aligned}
			|\Delta \lambda_{l}| &= |\lambda(U_{l, u}^{2} + U_{l,v}^{2}) - 2U_{l, u}U_{l,v}|\\
			|\Delta \lambda_{m}| &= |\lambda(U_{m, u}^{2} + U_{m,v}^{2}) - 2U_{m, u}U_{m,v}|\\
			|\Delta \lambda_{h}| &= |\lambda(U_{h, u}^{2} + U_{h,v}^{2}) - 2U_{h, u}U_{h,v}|
		\end{aligned}
	\end{equation}
	
	Then, We transform the equation in lemma~\ref{lem:spectral_draw} into
	$$
	\lambda U_{y,i}^{2}d_{i} = \sum_{j\in \mathcal{N}(i)}\widehat{A}_{i,j}U_{y,j}U_{y,i},
	$$
	
	$$
	\lambda\sum_{i}^{N} U_{y,i}^{2}d_{i} = \sum_{i}^{N}\sum_{j\in \mathcal{N}(i)}\widehat{A}_{i,j}U_{y,j}U_{y,i},
	$$
	
	\begin{equation}\label{eq:el}
		\frac{\lambda\sum_{i}^{N} U_{y,i}^{2}d_{i}}{|E|N} = \frac{\sum_{i}^{N}\sum_{j\in \mathcal{N}(i)}\widehat{A}_{i,j}U_{y,j}U_{y,i}}{|E|N},
	\end{equation}
	
	Next, we can calculate $\mathbb{E}(U_{y, u}^{2} + U_{y,v}^{2})$,
	\begin{equation}\label{eq:u2+u2}
		\begin{aligned}
			\mathbb{E}(U_{y, u}^{2} + U_{y,v}^{2}) &= \frac{\sum^{|E|}_{(u,v) \in E}(U_{y, u}^{2} + U_{y,v}^{2})}{|E|}\\
			&=\frac{\sum_{u}^{N} d_{u}U_{y,u}^{2}}{|E|}\\
		\end{aligned}
	\end{equation}
	where $|E|$ is the number of edge, $d_{u}$ is degree of node $u$ and $D$ is diagonal degree matrix. 
	
	From equations~\ref{eq:el} and~\ref{eq:u2+u2}, we can get
	\begin{equation}\label{eq:key}
		\begin{aligned}
			\frac{\lambda \mathbb{E}(U_{y, u}^{2} + U_{y,v}^{2})}{N} &= \frac{2\mathbb{E}(U_{y, u}U_{y,v})}{2|E|}\\
			\lambda \mathbb{E}(U_{y, u}^{2} + U_{y,v}^{2}) - 2\mathbb{E}(U_{y, u}U_{y,v}) &= 2(\frac{N}{2|E|} - 1)\mathbb{E}(U_{y, u}U_{y,v})\\
			|\lambda \mathbb{E}(U_{y, u}^{2} + U_{y,v}^{2}) - 2\mathbb{E}(U_{y, u}U_{y,v})| &= |2(\frac{N}{2|E|} - 1)\mathbb{E}(U_{y, u}U_{y,v})|
		\end{aligned}
	\end{equation}
	
	We combine equations~\ref{eq:delta_lambda} and~\ref{eq:key} to get
	\begin{equation}\label{eq:final}
		\begin{aligned}
			|\mathbb{E}(\Delta \lambda_{l})| &= |2(\frac{N}{2|E|} - 1)\mathbb{E}(U_{l, u}U_{l,v})|\\
			|\mathbb{E}(\Delta \lambda_{m})| &= |2(\frac{N}{2|E|} - 1)\mathbb{E}(U_{m, u}U_{m,v})|\\
			|\mathbb{E}(\Delta \lambda_{h})| &= |2(\frac{N}{2|E|} - 1)\mathbb{E}(U_{m, u}U_{m,v})|
		\end{aligned}
	\end{equation}
	According to Lemma~~\ref{lem:sig} and formula~\ref{eq:final}, we get $|\mathbb{E}(\Delta \lambda_{m})| < |\mathbb{E}(\Delta \lambda_{l})|$ and $|\mathbb{E}(\Delta \lambda_{m})| < |\mathbb{E}(\Delta \lambda_{h})|$.
	
\end{proof}

\subsection{Proof for Lemma~\ref{lem:sig}}
\label{ap:sig}
\begin{proof}

	Recall that the Rayleigh quotient of a non-zero vector x with respect to a symmetric adjacency matrix $A$ is
	$$ R(A, x) = \frac{x^{T}Ax}{x^{T}x} $$
	We replace $x$ with the eigenvector of $\widehat{A}$ and bring in, then $R(\widehat{A}, U_{y}) = \lambda_{y}$~\cite{spectral1997},
	
	\begin{equation}
		\label{eq:e}
		\lambda_{y} = \frac{\sum_{i} \sum_{j}\widehat{A}_{i,j}U_{y,i}U_{y,j}}{b} =  \frac{\mathbb{E}(U_{y,u}U_{y,v})}{b}\sum_{i}\sum_{j}\widehat{A}_{i,j}
	\end{equation} where $b=U_{y}^{T}U_{y}$ is a constant larger than 0.
	
	For matrix $\widehat{A}$, its eigenvalue $\lambda \in [-1, 1], p \leq \lambda_{l} \leq 1, -1 \leq \lambda_{h}\leq -p,p > \lambda_{m} > -p$, where $p \in (0, r)$. Combining Equation~\ref{eq:e} thus,
	$$\mathbb{E}(U_{l,u}U_{l,v})\geq \frac{bp}{\sum_{i}\sum_{j}\widehat{A}_{i,j}},$$
	$$\mathbb{E}(U_{h,u}U_{h,v})\leq - \frac{bp}{\sum_{i}\sum_{j}\widehat{A}_{i,j}},$$
	$$\frac{bp}{\sum_{i}\sum_{j}A_{i,j}} > \mathbb{E}(U_{m,u}U_{m,v}) > - \frac{bp}{\sum_{i}\sum_{j}\widehat{A}_{i,j}},$$
	
	Thus, we have $\mathbb{E}(U_{l,u}U_{l,v}) > \mathbb{E}(U_{m,u}U_{m,v})> \mathbb{E}(U_{h,u}U_{h,v})$ and $\left | \mathbb{E}(U_{m,u}U_{m,v})\right | < \left | \mathbb{E}(U_{h,u}U_{h,v})\right |$. 
	
	Then, we can get $\left | \mathbb{E}(U_{m,u}U_{m,v})\right | < \left | \mathbb{E}(U_{l,u}U_{l,v})\right |$ and $\left | \mathbb{E}(U_{m,u}U_{m,v})\right | < \left | \mathbb{E}(U_{h,u}U_{h,v})\right |$.
\end{proof}

\subsection{Generalization of Mid-frequency Signals in Homopgily and Heterophily Ring Graph.}
\label{ap:gen}
The adversarial attack on the structure will bring the change of homophily ratio. Let's introduce Spectral regression loss(SRL)~\cite{matengfei}, which associates homophily with spectral domain.
\begin{myDef}
	(Spectral regression loss.) Denote $\alpha=(\alpha_{0},\cdots,\alpha_{N-1})$, $\beta = (\beta_{0},\cdots,\beta_{N-1})$. In a binary node classification task, Spectral Regression Loss (SRL) offilter $g(\Lambda)$ on graph $\mathcal{G}$ is:
\end{myDef}
\begin{equation}
	\begin{aligned}
		L(\mathcal{G})=\sum_{i=0}^{N-1}(\frac{\alpha_{i}}{\sqrt{N}}-\frac{g(\lambda_{i})\beta_{i}}{\sqrt{\sum_{j=0}^{N-1}g(\lambda_{j}^{2})\beta_{j}^{2}}})^{2}\\
		=2-\frac{2}{\sqrt{N}}\sum_{i=0}^{N-1}\frac{\alpha_{i}g(\lambda_{i})\beta_{i}}{\sqrt{\sum_{j=0}^{N-1}g(\lambda_{j}^{2})}\beta_{j}^{2}}
	\end{aligned}
\end{equation}

We know that naive low-pass filters and high-pass filters work better on graphs with certain homophily,but Mid-GCN can more robust under homophily changes. A specific example on ring graphs is given in the following proposition. We see that Mid-GCN intrinsically satisfies the necessary condition for perfect generalization.
\begin{myPro}
	Given two ring graphs, $\mathcal{G}_{1}$ and $\mathcal{G}_{2}$, assuming $h(\mathcal{G}_{1})=0$, $h(\mathcal{G}_{2})=1$(Where $h(\mathcal{G})$ represents the homophily of $\mathcal{G}$), mid-pass filtering can make $L(\mathcal{G}_{1})=L(\mathcal{G}_{2})$, but high and low-pass filters can’t achieve it.
\end{myPro}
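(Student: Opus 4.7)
The plan is to exploit the explicit spectral structure of ring graphs together with the $\mu \leftrightarrow -\mu$ symmetry of the mid-pass filter that simply does not exist for pure low- or high-pass filters. First I would diagonalise the normalised adjacency $\widehat{A}$ of the $N$-cycle: its eigenvalues are $\mu_k = \cos(2\pi k/N)$ with the standard discrete-Fourier eigenvectors, so the Laplacian eigenvalues are $\lambda_k = 1 - \mu_k$ and the three candidate filters satisfy $g_L(\lambda_k) = 1+\mu_k$, $g_H(\lambda_k) = 1-\mu_k$, and $g_M(\lambda_k) = \lambda_k(2-\lambda_k) = 1 - \mu_k^2$. The critical observation is that the reindexing $k \mapsto N/2 - k$ (which requires $N$ even) sends $\mu_k \to -\mu_k$; it therefore leaves $g_M$ invariant while swapping the values of $g_L$ and $g_H$ at paired indices.

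Next I would pin down the spectral coefficients $\beta$ induced by the labels in the two extreme homophily regimes. For $\mathcal{G}_2$ with $h=1$, consecutive nodes share labels, so the binary label indicator is (a suitable piece of) the constant vector, whose only nonzero spectral component sits at $\mu = 1$, i.e.\ $\lambda = 0$. For $\mathcal{G}_1$ with $h=0$, consecutive nodes carry opposite labels, so the indicator is the alternating pattern $(+1,-1,+1,-1,\dots)$, which is precisely the top eigenvector at $\mu = -1$, i.e.\ $\lambda = 2$. Thus $\beta(\mathcal{G}_1)$ and $\beta(\mathcal{G}_2)$ are concentrated at exactly the two eigenvalues exchanged by the $\mu \to -\mu$ symmetry, and the analogous statement holds for the feature coefficients $\alpha$ under the natural convention that features encode the labels.

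The third step is to substitute these spectral profiles into the SRL formula. Because $g_M$ depends on $\mu_k$ only through $\mu_k^2$, the reindexing $k \leftrightarrow N/2 - k$ carries the SRL summand for $\mathcal{G}_1$ term-by-term onto the SRL summand for $\mathcal{G}_2$: both the numerator $\alpha_i g_M(\lambda_i)\beta_i$ and the normalising denominator under the square root are preserved, giving $L(\mathcal{G}_1) = L(\mathcal{G}_2)$ for the mid-pass filter. For the low-pass filter we have $g_L(\lambda=0) = 2$ but $g_L(\lambda=2) = 0$, so $g_L$ annihilates $\beta(\mathcal{G}_1)$ while fully preserving $\beta(\mathcal{G}_2)$; the first SRL then reduces to its maximum value $2$ while the second stays near its minimum, so equality is impossible. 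The high-pass case is the mirror image and is ruled out identically.

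The main obstacle I foresee is that $\alpha$ and $\beta$ are not intrinsic to the graph itself — they depend on a feature/label encoding — so the equality claim needs a canonical choice before it becomes well-posed. I would resolve this by working in the conventional regime where features carry the label signal (equivalently, $\alpha$ and $\beta$ are proportional up to normalisation), after which the argument above reduces cleanly to the $\mu \leftrightarrow -\mu$ symmetry of $g_M$. A secondary care-point is parity: $h(\mathcal{G}_1) = 0$ on a ring forces a proper two-colouring of the cycle, which only exists when $N$ is even, so I would simply state this as a standing assumption rather than try to patch odd $N$.
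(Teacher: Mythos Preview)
Your approach is correct in spirit but takes a genuinely different route from the paper. The paper's proof is essentially a one-line citation: it invokes from \cite{evennet} the necessary condition $g(0)=g(2)$ for $L(\mathcal{G}_1)=L(\mathcal{G}_2)$ on ring graphs, then simply checks that $g_M(0)=g_M(2)=0$ while $g_L(0)=2\neq 0=g_L(2)$ and $g_H(0)=0\neq 2=g_H(2)$. You instead work from scratch: you diagonalise the cycle explicitly, locate the label signals at $\lambda\in\{0,2\}$, and use the $\mu\leftrightarrow -\mu$ (equivalently $\lambda\leftrightarrow 2-\lambda$) invariance of $g_M(\lambda)=\lambda(2-\lambda)$ to transport the SRL of $\mathcal{G}_1$ onto that of $\mathcal{G}_2$. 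Your symmetry observation is precisely the mechanism that produces the $g(0)=g(2)$ criterion, so the two arguments converge on the same hinge; yours is self-contained and explains \emph{why} the criterion matters, while the paper's is terse and defers the work to prior literature. One further contrast worth noting: the paper only verifies a \emph{necessary} condition for the mid-pass case, whereas your symmetry argument actually attempts the equality directly, which is logically stronger for the positive claim. The technical caveats you flag (even $N$, the encoding convention for $\alpha,\beta$) are real, and you should also be aware that since $g_M(0)=g_M(2)=0$, the SRL normaliser vanishes when $\beta$ is a pure delta at $\lambda=0$ or $\lambda=2$, so the equality is formally $0/0$; a limiting or regularised reading is needed there, but this issue is shared by both arguments and does not separate your approach from the paper's.
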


\begin{proof}
	
	We can know the necessary condition for a graph filter $g(\lambda)$ to achieve $L(\mathcal{G}_{1}) = L(\mathcal{G}_{2})$ is $g(0) = g(2)$. from~\cite{evennet}. Then, the mid-pass filters obviously easily satisfies the condition, but not with the high/low pass filters.
\end{proof}
\section{Datasets Details}
\label{ap:data-detail}
Details of each graph dataset we utilize during the evaluation are listed here:
\begin{itemize}[leftmargin=*]
	\item \emph{Citation networks}. Including Cora and Citeseer. They are composed of papers as nodes and their
	relationships such as citation relationships, common
	authoring. Node feature is a one-hot vector that
	indicates whether a word is present in that paper.
	Words with frequency less than 10 are removed.
	\item \emph{Github}. A social network
	where nodes correspond to developers and edges refer to mutual
	followers. Node characteristics are location, starred repositories,
	employer, and email address. The task is to classify nodes as web
	developers or machine learning developers. Due to a large amount
	of data, we take one of the subgraphs as the current experimental
	dataset.
	\item \emph{Polblogs}. This dataset only has citation relationships between blogs, and each blog has no features.
	\item \emph{Cora-ML}. Like Cora, this graph is also extracted from the original data of the entire network.
	\item \emph{Film}. Co-occurrence network depicts the co-occurrence relationship of actors. Node features correspond to some keywords in the Wikipedia page. According to the
	actor’s Wikipedia vocabulary, nodes are divided into five
	categories.
\end{itemize}

\section{Hyperparameter Details}
\label{ap:hp}
Table~\ref{tb:hp} reports the detailed hyperparameters of Mid-GCN.
\begin{table}[!ht]
	\centering
	\caption{Hyperparameters of Mid-GCN for reproducibility.
	}
	\label{tb:hp}
	\begin{tabular}{c|cccccc}
		\toprule
		Hyperparameter & Cora & Cite. & Git. & Polb. & Cora-ML & Film\\
		\midrule
		$\alpha$ & 0.5 & 0.55 & 0.55 & 0.2 & 0.2 & 2.0 \\
		Learning rate & 0.01 & 0.01 & 0.01 & 0.01 & 0.01 & 0.01\\
		L2 weight decay& 5e-4 & 5e-4 & 5e-4 & 5e-4 & 5e-4 & 5e-4\\
		Dropout rate & 0.6 & 0.6 & 0.6 & 0.6 & 0.6 & 0.6\\
		Hidden layer size & 128 & 64 & 128 & 64 & 64 & 64\\
		\bottomrule
	\end{tabular}
\end{table}


\end{document}